\newcommand{\figsize}{0.3}
\newtheorem{lemma}{Lemma}
\begin{document}
%
\title{Polytopic Matrix Factorization: Determinant Maximization Based Criterion and Identifiability}
%
%
%

\author{Gokcan Tatli,~\IEEEmembership{Member,~IEEE,}
and~Alper T. Erdogan,~\IEEEmembership{Senior Member,~IEEE,}
\thanks{ This work is partially supported by an AI Fellowship provided by the
KUIS AI Lab.}
}

%
%

\newcommand{\ba}{\mathbf{a}}
\newcommand{\be}{\mathbf{e}}
\newcommand{\bx}{\mathbf{x}}
\newcommand{\bu}{\mathbf{u}}
\newcommand{\bv}{\mathbf{v}}
\newcommand{\by}{\mathbf{y}}
\newcommand{\bq}{\mathbf{q}}
\newcommand{\bg}{\mathbf{g}}
\newcommand{\bY}{\mathbf{Y}}
\newcommand{\bX}{\mathbf{X}}
\newcommand{\bU}{\mathbf{U}}
\newcommand{\bV}{\mathbf{V}}
\newcommand{\bB}{\mathbf{B}}
\newcommand{\bs}{\mathbf{s}}
\newcommand{\bz}{\mathbf{z}}
\newcommand{\bS}{\mathbf{S}}
\newcommand{\bL}{\mathbf{L}}
\newcommand{\bK}{\mathbf{K}}
\newcommand{\bH}{\mathbf{H}}
\newcommand{\bw}{\mathbf{w}}
\newcommand{\bW}{\mathbf{W}}
\newcommand{\bA}{\mathbf{A}}
\newcommand{\bC}{\mathbf{C}}
\newcommand{\bI}{\mathbf{I}}
\newcommand{\bF}{\mathbf{F}}
\newcommand{\bQ}{\mathbf{Q}}
\newcommand{\Ball}{\mathcal{B}}
\newcommand{\Tin}{\mathcal{T}_{in}}
\newcommand{\Tout}{\mathcal{T}_{out}}
\newcommand{\Pcal}{\mathcal{P}}
\newcommand{\Pex}{\Pcal_{\text{ex}}}
\newcommand{\bD}{\mathbf{D}}
\newcommand{\bPi}{\bm{\Pi}}
\newcommand{\blambda}{\bm{\lambda}}
\newcommand{\bSigma}{\bm{\Sigma}}
\newcommand{\bGam}{\bm{\Gamma}}

\newtheorem{theorem}{Theorem}

\theoremstyle{definition}
\newtheorem{definition}{Definition}[section]

\markboth{Published in IEEE Transactions on Signal Processing, 2021,doi: 10.1109/TSP.2021.3112918.}{}%

\maketitle

\begin{abstract}
We introduce Polytopic Matrix Factorization (PMF) as a novel data decomposition approach. In this new framework, we model input data as unknown linear transformations of some latent vectors drawn from a polytope. In this sense, the article considers a semi-structured data model, in which the input matrix is modeled as the product of a full column rank matrix and a matrix containing samples from a polytope as its column vectors. The choice of polytope reflects the presumed features of the latent components and their mutual relationships. As the factorization criterion, we propose the determinant maximization (Det-Max) for the sample autocorrelation matrix of the latent vectors. We introduce a sufficient condition for identifiability, which requires that the convex hull of the latent vectors contains the maximum volume inscribed ellipsoid of the polytope with a particular tightness constraint. Based on the Det-Max criterion and the proposed identifiability condition, we show that all polytopes that satisfy a particular symmetry restriction qualify for the PMF framework. Having infinitely many polytope choices provides a form of flexibility in characterizing latent vectors. In particular, it is possible to define latent vectors with heterogeneous features, enabling the assignment of attributes such as nonnegativity and sparsity at the subvector level. The article offers examples illustrating the connection between polytope choices and the corresponding feature representations.
\end{abstract}

\begin{IEEEkeywords}
Polytopic Matrix Factorization, Nonnegative Matrix Factorization, Sparse Component Analysis, Independent Component Analysis, Bounded Component Analysis, Blind Source Separation
\end{IEEEkeywords}

%
\IEEEpeerreviewmaketitle

\section{Introduction}

\IEEEPARstart{M}{atrix} factorization methods are fundamental algorithmic tools for both signal processing and machine learning (e.g., \cite{cichocki2009nonnegative, fu2019nonnegative, smaragdis2003non,xu2003document,lin2018maximum,erdogan2013class,georgiev2005sparse}). Revealing information hidden inside input data is a central problem in several applications. A common solution approach is to model the input matrix as the product of two factors. 

In unsupervised settings, both factors are unknown, and there is no available training information for their estimation. Structured matrix factorization (SMF) methods utilize prior information or assumptions on both factors, such as rank, nonnegativity, sparsity, and antisparsity, to achieve the desired decomposition. In the semi-structured matrix factorization that we pursue in this article, the left factor is simply a full column rank matrix with no additional structure. In this setting, we refer to the columns of the right-factor as latent vectors, which have some presumed structure. The left factor is the linear transformation matrix that maps latent vectors to inputs. Due to the full column rank assumption on the left-factor matrix, the scope of the article is limited to the (over)determined case.

We can define the attributes of latent vectors through the choice of their domain. The topology of this set determines both the individual properties of latent vector components and their relationships. For example, for Nonnegative Matrix Factorization (NMF)  \cite{paatero1994positive,lee1999learning,fu2019nonnegative}, the domain choice is the nonnegative orthant, and for a related approach, Simplex Structured Matrix Factorization (SSMF)   \cite{lin2018maximum}, it is the unit simplex. In addition, we can list two polytopic sets, namely the $\ell_\infty$-norm-ball for the antisparse version of Bounded Component Analysis (BCA)  \cite{cruces2010bounded,erdogan2013class} and the $\ell_1$-norm-ball for Sparse Component Analysis (SCA)  \cite{georgiev2005sparse,elad2010sparse,babatas2018algorithmic}, as further examples. 

These matrix factorization frameworks have found successful applications in different domains. Basic applications of NMF include document mining \cite{xu2003document}, feature extraction for natural images \cite{lee1999learning}, source separation for hyperspectral images \cite{yokoya2011coupled}, community detection \cite{huang2019detecting} and audio demixing \cite{smaragdis2003non,ozerov2009multichannel,leplat2020blind}. The main application area for SCA has been sparse dictionary learning, which has laid the foundation for the {\it sparse coding} principle, utilized in both computational neuroscience \cite{olshausen1997sparse}
and machine learning \cite{lee2007efficient}. BCA has both dependent source and short-data-length separation capabilities with applications in natural image separation and digital communications \cite{cruces2010bounded,erdogan2013class,babatas2018algorithmic}.

Identifiability is a crucial concept in determining the applicability of matrix factorization methods. It concerns the ability to obtain unique factors of the input data up to some acceptable ambiguities, such as sign and permutation. All of the aforementioned domain choices, i.e., the nonnegative orthant, unit simplex, $\ell_\infty$, and $\ell_1$-norm balls, have been shown to lead to identifiable data models \cite{donoho2003does, lin2015identifiability, erdogan2013class, babatas2018algorithmic}. 

A fundamental question is addressed in this article: \textit{ Can we extend domains enabling identifiability beyond these existing examples?} We indeed provide a positive answer and show that all polytopes that comply with a particular symmetry restriction qualify. We refer to the associated framework as Polytopic Matrix Factorization (PMF) and the polytopes that qualify for this framework as ``identifiable polytopes". The availability of infinite identifiable polytope choices offers a degree of freedom in generating a diverse set of feature attributes for latent vectors. For example, as shown in Section \ref{sec:specialPMF}, we can combine different attributes, such as nonnegativity, sparsity, and antisparsity, which separately exist in the SMF frameworks listed above, through proper selection of polytopes. Furthermore, as illustrated by the example in Section \ref{sec:numexperiments}, we can define latent vectors with heterogeneous features by performing attribute assignments at the subvector level. In other words, it is possible, for example, to define latent vectors in which only a fraction of the components are nonnegative, and the sparsity is imposed on subsets of components. Part of this work, mainly Theorem \ref{thm:generalized} in Section \ref{sec:generalizedpmf} on the characterization of identifiable polytopes for PMF, was presented in \cite{tatli:2021icassp}.

For the identification of the factor matrices in the PMF model, we propose the use of {\it determinant maximization criterion}, which has been successfully employed in both the NMF \cite{schachtner2011towards,fu2019nonnegative} and BCA \cite{erdogan2013class,babatas2018algorithmic} approaches. The determinant of the sample correlation matrix acts as a scattering measure for the set of latent vectors and its maximization targets to exploit the presumed spread of the latent vectors inside the polytope.

The success of the Det-Max criterion for the perfect recovery of factor matrices is dependent on the scattering of the latent vectors inside the polytope. Intuitively, if they are concentrated in a relatively small subregion of the polytope, they would fail to reflect its topology. Therefore, any criterion exploiting polytope membership information would fail in such a case. This article offers a sufficient condition on the spread of latent vectors inside the polytope to enable identifiability for the Det-Max criterion, providing theoretical grounds for this intuition.

We can position the proposed identifiability condition on PMF as an extension of the existing results in other SMF frameworks.
The existing BCA approaches use two particular polytopes: $\ell_\infty$-norm-ball for antisparse and $\ell_1$-norm-ball for sparse components. The identifiability results for BCA assume that latent vectors contain the vertices of these polytopes \cite{erdogan2013class,babatas2018algorithmic}. Similarly, the early identification results for NMF used the condition that latent vectors include the scaled corner points of the unit simplex  \cite{donoho2003does,laurberg2008theorems,arora2016computing}. This condition is referred as the separability/pure pixel condition or the groundedness assumption. These assumptions in both frameworks require the inclusion of specific points in a random collection of latent vectors, which is too restrictive for practical plausibility.
The ``sufficiently scattered" condition proposed for NMF in \cite{huang2013non} significantly relaxed the corner inclusion assumption. This new condition requires that the conic hull of latent vectors  contains a specific ``reference cone"  
\cite{huang2013non,fu2015blind,fu2018identifiability,fu2019nonnegative}. Lin \textit{et al.} \cite{lin2015identifiability} proposed a related sufficient scattering condition for SSMF.

Using geometric principles similar to those proposed for NMF and SSMF, we introduce a novel ``sufficiently scattered" condition for the PMF framework. 
This new condition is much weaker than the vertex inclusion assumption used in BCA identifiability analysis for $\ell_1$ and $\ell_\infty$-norm balls  \cite{erdogan2013class,babatas2018algorithmic}. Furthermore, it is applicable to the class of all identifiable polytopes. The proposed criterion uses  the maximum volume inscribed ellipsoid (MVIE) of the polytope, which can be considered its best inscribed ellipsoidal approximation. According to this new criterion, the samples should be sufficiently spread across the polytope such that their convex hull, i.e., the smallest convex set that contains these samples, also contains the MVIE of the polytope with a particular tightness constraint. In other words, they can be used to construct a more accurate model of the polytope than its best ellipsoidal approximation. In Section \ref{sec:specialPMF}, we illustrate latent vector sets satisfying the proposed identifiability condition for some particular polytopes. As demonstrated by these examples, this new condition leads to more practically plausible identifiable data models, compared to the vertex inclusion assumption. Furthermore, this condition forms the basis for the generalized polytope identifiability result offered in Section \ref{sec:generalizedpmf}. 
We note that Lin \textit{et al.} \cite{lin2018maximum} proposed {\it an algorithm for SSMF} based on the MVIE of {\it the convex hull of the input vectors}. In our context, we use the MVIE of {\it the polytope} to define a {\it sufficient condition} for the {\it PMF identifiability}. Therefore, the MVIE concept is used in different domains (input vs latent vector spaces) and for different purposes (algorithm vs identifiability analysis).

We can summarize the main contributions of the article as follows:
\begin{itemize}
    \item We offer a new, unsupervised data decomposition framework called Polytopic Matrix Factorization (PMF).
    \item We propose the use of a Det-Max criterion and introduce a novel geometric identifiability condition  for PMF based on the MVIE of the polytope.
    \item We provide a characterization of identifiable polytopes. 
    \item We illustrate the potential of the proposed PMF framework in terms of flexible description of latent vectors with heterogeneous features. 
\end{itemize}

The following describes the organization of this article. In Section \ref{sec:PMFsetup}, we provide the data model and the Det-Max optimization criterion for PMF. We also introduce the proposed sufficient scattering-based identifiability condition for the PMF framework. In Section \ref{sec:specialPMF}, we focus on four special polytopes corresponding to the combinations of antisparse/sparse and nonnegative/signed attributes and provide their identifiability results. In Section \ref{sec:generalizedpmf}, we offer a theorem on the characterization of polytopes that qualify for the PMF framework. Section \ref{sec:algorithm} presents a PMF algorithm adopted from the NMF literature. Section \ref{sec:numexperiments} contains numerical examples for PMF. Finally, Section \ref{sec:conclusion} concludes the study.

Table \ref{tab:notation} outlines the basic notations used throughout the article.
\begin{table}[ht]
\renewcommand{\arraystretch}{1.1}
\captionsetup{justification=centering, labelsep=newline}
\caption{\textsc{Notation}}
\label{tab:notation}
\begin{center}
 \begin{tabularx}
{0.45\textwidth}{|c | c |}
 \hline
 Notation & Meaning \\ [0.05ex] 
 \hline\hline
 $\bS_{j,:} \ (\bS_{:,j})$ & $j^{\textrm{th}}$ row (column) of matrix $\bS$
 \\ 
 \hline
  \multirow{2}{*}{$\be_k$} & Standard basis vector \\
 & which is all zeros except $1$ at the $k^{\textrm{th}}$ index \\
 \hline
  \multirow{2}{*}{$\langle \bx,\by \rangle$} & Euclidean inner product between the vectors\\ & $\bx,\by \in \mathbb{R}^n$ defined as $\by^T\bx$
 \\ 
 \hline
  \multirow{2}{*}{$\Ball_p$} & unit $\ell_p$-norm-ball defined as\\
             & $\{ \bx  \mid \|\bx\|_p\le 1\}$
 \\ [0.05ex] 
 \hline
    \multirow{2}{*}{$\mathcal{S}^{*,\mathbf{d}}$} & the polar of the set $\mathcal{S}$ with respect to the point $\mathbf{d}$\\
     & (See Appendix \ref{app:convex}) 
 \\ 
 \hline
 $\text{conv}(\bS)$ & the convex hull of the columns of $\bS$\\ 
 \hline
  $\text{cone}(\bS)$ & the conic hull of the columns of $\bS$  \\
  \hline
  \multirow{2}{*}{$\mathcal{K}^d$} & the dual cone of  the cone $\mathcal{K}$ which is defined as \\  & $\{\bx \mid \langle \bx, \by \rangle \ge 0, \forall \by \in \mathcal{K}\}$  \\
 \hline
 \multirow{2}{*}{$\text{ext}(\Pcal)$} & the set of extreme points (vertices, or corner points) \\ & of the polytope $\Pcal$  \\
 \hline
  $\text{bd}(\mathcal{S})$ & the boundary of the set  $\mathcal{S}$  \\
 \hline
  \multirow{2}{*}{$\bA(S)$} & the image of the set $S$ under \\ & the linear transformation with matrix $\bA$
 \\
 \hline
 \multirow{2}{*}{$\mathbf{1} (\mathbf{0})$} & a  vector or a matrix with all ones (zeros)\\ &  of the appropriate dimensions \\
 \hline
  $\mathbb{R}_+^r$ & The $r$-dimensional nonnegative orthant\\
  \hline
 $\bA\succ(\succeq)\mathbf{0}$ & $\bA$ is a positive (semi)definite matrix \\
 \hline
 \multirow{2}{*}{$\alpha \mathcal{S}+\mathbf{d}$} & The image of the set $\mathcal{S}$ under \\ & the transformation $f(\bx)=\alpha \bx+\mathbf{d}$\\
  \hline
\end{tabularx}
\end{center}
\end{table}

\section{Polytopic Matrix Factorization Problem}
\label{sec:PMFsetup}
In this section, we introduce Polytopic Matrix Factorization as a new unsupervised data decomposition framework.
We  start by describing the PMF problem in Section \ref{sec:PMFproblem}. Then, we define the determinant maximization based criterion for the PMF problem  in Section \ref{sec:crit}. In connection with this criterion, we provide  the proposed  PMF identifiability condition in Section \ref{sec:proposedidentifiability}.

\subsection{PMF Problem}
 \label{sec:PMFproblem}

For the PMF problem, we assume the following generative data model: the input matrix $\bY \in \mathbb{R}^{M\times N}$ is given by
\begin{eqnarray}
 \bY=\bH_g\bS_g, \label{eq:PMFgen}
\end{eqnarray}
where
\begin{itemize}
\item $\bH_g\in \mathbb{R}^{M \times r}$ is the ground truth of the left-factor matrix, which is assumed to be full column rank; and 
\item $\bS_g\in \mathbb{R}^{r \times N}$ is the ground truth of the right-factor matrix, where we assume $r\le \min(M,N)$.
The underlying assumption of the PMF framework can be written as
\begin{eqnarray}
 {\bS_g}_{:,j} \in \mathcal{P}, \; j= 1, \ldots, N, \label{eq:PMFgen2}
\end{eqnarray}
where $\Pcal$ is a convex polytope.
\item There are two canonical forms to describe $\Pcal$:
\begin{itemize}
    \item \uline{\bf H-Form} (\textit{Intersections of Half-spaces}): A convex polytope $\Pcal$ can be defined in the form
    \begin{eqnarray}
    \Pcal=\{\bx \ \mid \langle \ba_i, \bx\rangle \leq b_i, i=1, \ldots, f\}, \label{eq:ptope1}
    \end{eqnarray}
    where $f$ is the number of faces, and vectors $\ba_i$ are the face normals. Each inequality in (\ref{eq:ptope1}) represents a half-space, and the intersection of  these half-spaces forms a convex polyhedron. If $\Pcal$ is bounded, we refer to it as a convex polytope.
    \item \uline{\bf V-Form} (\textit{Convex Hull of Vertices}): Let $\bv_1, \ldots, \bv_m$ represent the vertices, or the extreme points, of a convex polytope, where $m$ is the number of vertices, and then we can use 
    \begin{eqnarray}
    \Pcal=\text{conv}(\left[\begin{array}{cccc} \bv_1 & \bv_2 &  \ldots & \bv_m\end{array}\right]),
    \label{eq:ptope2}
    \end{eqnarray}
    for the representation of the corresponding polytope.
\end{itemize}
The conversion between these  canonical forms is referred  to as the {\it polyhedral representation conversion problem}  \cite{bremner2009polyhedral}.
\end{itemize}

The goal of PMF is to obtain a factorization of the input data $\bY$ in the form
 $\bY=\bH\bS$
such that these factors satisfy
\begin{align}
 \bH&=\bH_g\bPi^T \bD^{-1}, \label{eq:desired1} \\ 
 \bS&=\bD \bPi\bS_g, \label{eq:desired2}
 \end{align}
where  $\bPi \in \mathbb{R}^{r \times r}$ is a permutation matrix that represents unresolvable ambiguity in obtaining the ordering of the columns (rows) of $\bH_g$ ($\bS_g$) and $\bD \in \mathbb{R}^{r \times r }$ is a full rank diagonal matrix that corresponds to the scaling ambiguity.

In Section \ref{sec:specialPMF}, we  provide PMF identifiability results using  four particular polytopes of practical interest:
\begin{itemize}
\item[i.] $\Pcal=\Ball_\infty$, i.e., the unit $\ell_\infty$-norm-ball, which we refer to as the ``antisparse" PMF case hereafter;
\item[ii.] $\Pcal=\Ball_1$, i.e., the unit $\ell_1$-norm-ball, which we refer to as the ``sparse" PMF case;
\item[iii.] $\Pcal=\Ball_\infty \cap \mathbb{R}^r_+=\{\bx \mid \mathbf{0}\le \bx \le \mathbf{1}\}$, which is referred to as the ``antisparse nonnegative" PMF case; and
\item[iv.] $\Pcal=\Ball_1 \cap \mathbb{R}^r_+=\{\bx \mid \mathbf{1}^T \bx \le 1, \bx\ge 0\}$, which is referred to as the ``sparse nonnegative" PMF case.
\end{itemize}
We  extend these results for a wider range (infinite number) of polytopes  in Section \ref{sec:generalizedpmf}.

\subsection{The Criterion for Identification}
 \label{sec:crit}
For the PMF problem outlined in Section \ref{sec:PMFsetup}, we propose the use of the determinant maximization (Det-Max) approach, which has been successfully utilized in both the NMF  \cite{chan2011simplex,fu2018identifiability,fu2019nonnegative} and antisparse/sparse BCA  \cite{erdogan2013class,inan2014convolutive,babatas2018algorithmic} frameworks.
Therefore, the following serves as the prototype optimization problem throughout the article:
\begin{maxi!}[l]<b>
{\bH\in \mathbb{R}^{M \times r},\bS\in \mathbb{R}^{r\times N}}
{\det(\bS\bS^T)\label{eq:detmaxobjective}}{\label{eq:detmaxoptimization}}{}
\addConstraint{\bY=\bH \bS}{\label{eq:detmaxconstr1}}{}
\addConstraint{\bS_{:,j} \in \mathcal{P}}{,\quad}{j= 1, \ldots, N.\label{eq:detmaxconstr2}}{ }
\end{maxi!}
The objective function of the Det-Max Optimization Problem in (\ref{eq:detmaxobjective}) is equal to the determinant of the (scaled) sample correlation matrix $\mathbf{R}_\mathbf{s}=\frac{1}{N}\bS\bS^T=\frac{1}{N}\sum_{j=1}^N\bS_{:,j}\bS_{:,j}^T$, which is a measure of nondegenerate scattering. In the zero mean case, the objective function boils down to ``generalized variance", defined as the product of the eigenvalues of the covariance matrix  \cite{sengupta2004generalized}. These eigenvalues are the variances for the principal directions. Due to its product form, generalized variance is sensitive to the existence of directions with small variations. Therefore, its maximization corresponds to a nondegenerate spreading of the corresponding samples in all directions.

The determinant minimization criterion employed in the NMF approaches of \cite{schachtner2011towards,fu2019nonnegative} corresponds to the minimization of det($\bH^T\bH$) in our problem. However, we utilize a dual approach that maximizes det($\bS\bS^T$) for our identifiability results.
The following definition classifies generative PMF settings with respect to the determinant maximization criterion:

\begin{definition}\label{def:detmaxident}{\it Det-Max Identifiable PMF Generative Setting}: The generative data model described by (\ref{eq:PMFgen}) and (\ref{eq:PMFgen2}) is called ``Det-Max identifiable PMF setting" if all of the  solutions of the corresponding {\it Det-Max optimization problem} in (\ref{eq:detmaxoptimization}) satisfy the forms in (\ref{eq:desired1}) and (\ref{eq:desired2}).
\end{definition}

\subsection{Proposed PMF Identifiability Condition}
\label{sec:proposedidentifiability}
One of the basic premises of this article is to provide an identifiability condition for the Det-Max optimization problem introduced in Section \ref{sec:crit}. This criterion assumes that the columns of the generative model matrix $\bS_g$ are well spread inside $\Pcal$. The Det-Max optimization problem in (\ref{eq:detmaxoptimization}) targets the dispersal of the columns of $\bS$ to exploit this assumption. The identifiability condition offered in this section is a geometric condition on the columns of $\bS_g$, to guarantee their sufficient scattering in $\Pcal$.

Earlier Det-Max optimization based BCA approaches used the inclusion of the polytope vertices as the sufficient identifiability condition for $\Ball_\infty$ \cite{erdogan2013class} and $\Ball_1$  \cite{babatas2018algorithmic}. This assumption resembles the ``separability", ``pure pixel" or ``groundedness" sufficient condition of the NMF/SSMF frameworks  \cite{donoho2003does,laurberg2008theorems}, requiring that latent vectors include the vertices of the unit simplex or their scaled versions. If we assume that the latent vectors are randomly drawn from their domains, the probability of the vertex inclusion is very low. Therefore, from the practical plausibility standpoint, we desire less stringent sufficient conditions.
\begin{figure}[!tbp]
    \centering
    \begin{minipage}[b]{0.16\textwidth}
    \includegraphics[width=0.95\textwidth,trim={6.0cm 5cm 8cm 4.3cm},clip
    ]{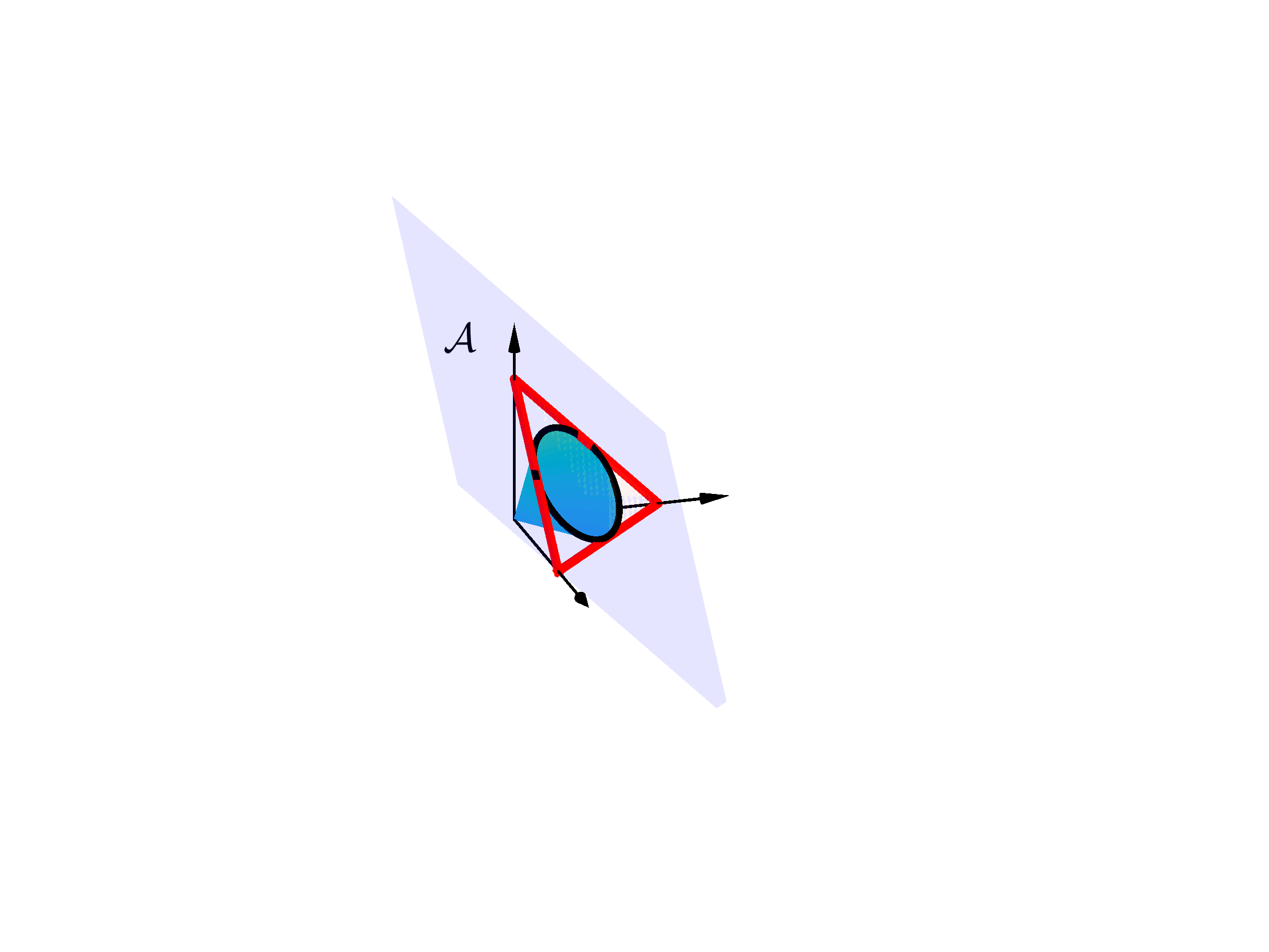}
    \caption*{(a)}
    \label{fig:nmf1}
    \end{minipage}\hfill
    \begin{minipage}[b]{0.20\textwidth}
    \includegraphics[width=0.8\textwidth
    ,trim={3.0cm 0.5cm 2cm 0.3cm},clip
    ]{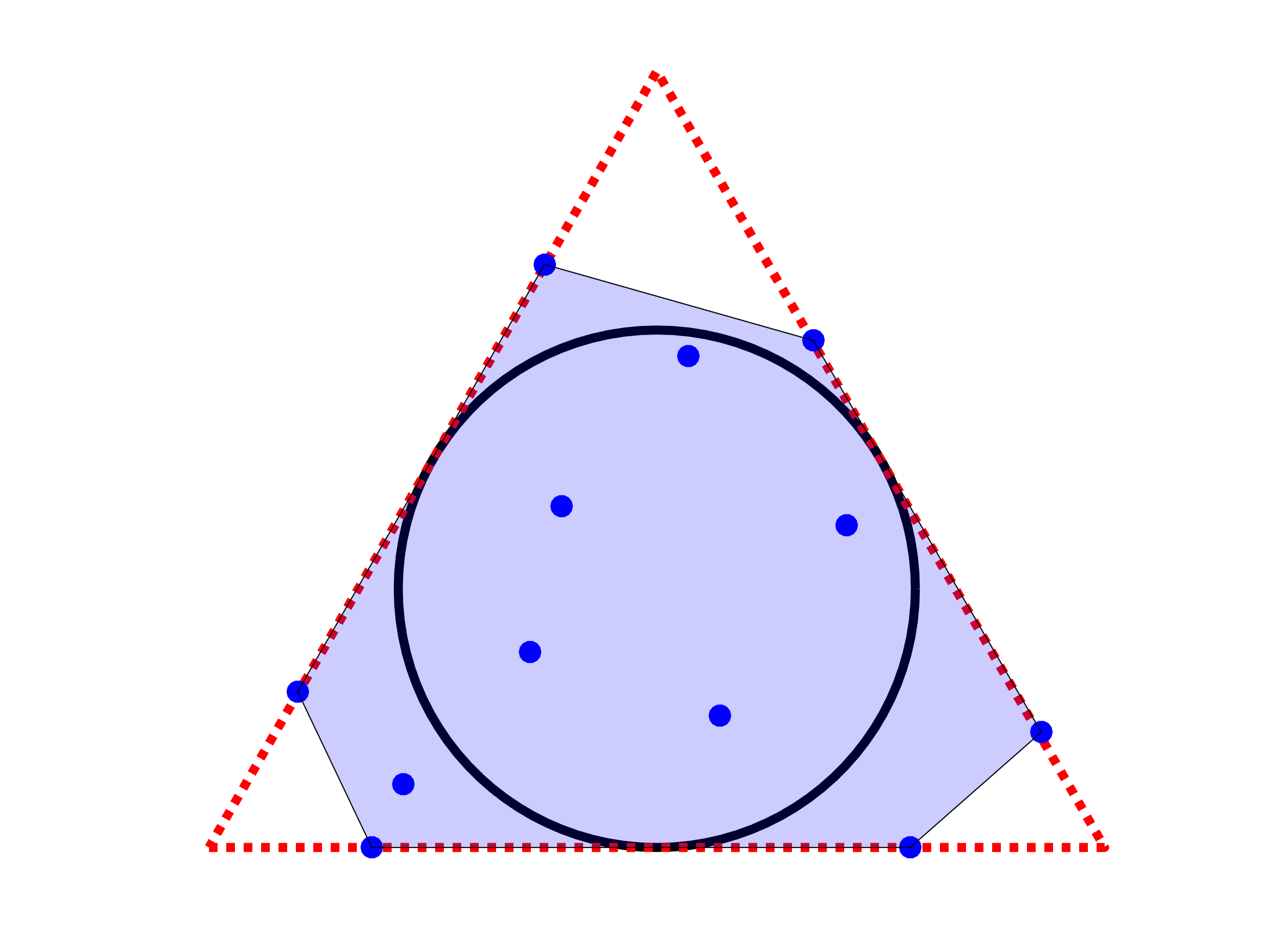}
    \caption*{(b)}
    \label{fig:nmf2}
    \end{minipage}
    \caption{Sufficient scattering condition for NMF: (a) 3D illustration of the unit simplex $\Delta_r$ (triangle with the red boundary), the second order cone $\mathcal{C}$, and the hyperplane $\mathcal{A}$. (b) an illustration of sufficiently scattered samples (dots) and their convex hull in relation to $\mathcal{C}\cap \mathcal{A}$.}
    \label{fig:nmfss}
\end{figure}
To address this issue, weaker ``sufficiently scattered conditions" were introduced for NMF. These conditions are mainly based on the enclosure of the second order cone $\mathcal{C}=\{\bx \mid \bx^T \mathbf{1}\ge \sqrt{r-1}\|\bx\|_2,\bx\in\mathbb{R}^r\}$ as a measure of spread inside the nonnegative orthant.
Figure \ref{fig:nmfss}(a) illustrates $\mathcal{C}$ for the three-dimensional case. A common approach employed in NMF algorithms is to preprocess inputs to enforce unit $\ell_1$-norm constraints on the nonnegative latent vectors. As a result of this normalization, the original vectors in nonnegative orthant $\mathbb{R}_+^r$ are mapped to the unit simplex $\Delta_r=\{\bx \mid \mathbf{1}^T\bx=1, \bx\ge0, \bx \in \mathbb{R}^r\}$, which is the region with the triangular boundary in Figure \ref{fig:nmfss}(a). Due to this mapping, we can focus on the $r-1$ dimensional affine subspace $\mathcal{A}=\{\bx \mid \mathbf{1}^T \bx=1,\bx\in\mathbb{R}^r\}$. Figure \ref{fig:nmfss}(b) illustrates the restriction to $\mathcal{A}$, where the latent vectors are represented with the dots, and $\text{bd}(\mathcal{C}\cap\mathcal{A})$ is the circle.

Based on this geometric setting, Fu \textit{et al.} \cite{fu2018identifiability} proposed the combination of the following conditions for sufficient scattering: 
\begin{itemize}
    \item[] (NMF.SS.i) $\text{cone}(\bS)\supseteq \mathcal{C}$, 
    \item[] (NMF.SS.ii)
    $\text{cone}(\bS)^d \cap \text{bd}( \mathcal{C}^d)=\{\gamma \mathbf{e}_k \mid  \gamma>0, k=1, \ldots r\}$. 
\end{itemize}
The notation $\mathcal{K}^d$ in (NMF.SS.ii) represents the dual cone of $\mathcal{K}$, as defined in Table \ref{tab:notation}.
The first condition, (NMF.SS.i), ensures that the conic hull of the columns of $\bS$ contains $\mathcal{C}$. Restricted to the affine subspace $\mathcal{A}$, this condition is equivalent to that in which the convex hull of the (normalized) latent vectors, the purple shaded region in Figure \ref{fig:nmfss}(b), contains $\mathcal{C}\cap\mathcal{A}$. The second condition, (NMF.SS.ii), limits the tightness of the enclosure by  constraining the points of tangency between $\mathcal{C}$ and $\text{cone}(\bS)$.

Lin \textit{et al.} \cite{lin2015identifiability} introduced an alternative but related  \cite{fu2016robust} condition for SSMF, which is based on the set 
$\mathcal{R}(a)=(a\Ball_2) \cap \Delta_r$,
i.e., the intersection of the origin centered hypersphere with radius $a$ and the unit simplex. The constant $\gamma=\sup\{a\le 1 \mid \mathcal{R}(a)\subseteq \text{conv}(\bS)\}$, where the columns of $\bS$ are in $\Delta_r$, is defined as the uniform pixel purity level. The sufficiently scattered condition in  \cite{lin2015identifiability} requires that $\gamma>\frac{1}{r}$.  

In this article, we extend the sufficient scattering condition approach introduced for NMF in \cite{fu2018identifiability} to PMF. For this purpose, we replace the second order cone $\mathcal{C}$ in NMF with the MVIE of the polytope. The MVIE serves as the reference object to measure the spread of the latent vectors inside $\Pcal$. The MVIE of a polytope $\mathcal{P}$ can be represented with (see Section 8.4.2, Page 400 of  \cite{boyd2004convex})
\begin{eqnarray}
 \mathcal{E}_\mathcal{P}=\{\bC_\mathcal{P}\bu+\bg_\mathcal{P} \mid \|\bu\|_2\le 1\}, \label{def:ellipsoid}
 \end{eqnarray}
where, for a polytope defined by (\ref{eq:ptope1}), the pair $(\bC_\mathcal{P}\in\mathbb{R}^{r\times r},\bg_{\mathcal{P}}\in \mathbb{R}^{r})$ is obtained as the optimal solution of the optimization problem:
\begin{mini!}[l]<b>
{\bC\in\mathbb{R}^{r\times r},\bg \in \mathbb{R}^{r}}{-\log\det\bC\label{app:mvieobjective}}{\label{app:mvie}}{}
\addConstraint{\|\bC\ba_i\|_2 +\ba_i^T\bg}{\le b_i,\quad}{i=1, \ldots, f}
\addConstraint{\bC}{\succeq\mathbf{0}.}{}
\end{mini!}
The following theorem by Fritz John  \cite{ball1992ellipsoids} is useful for identifying spherical MVIEs:
\begin{theorem}\label{johnstheorem} $\Ball_2$ is the ellipsoid of maximal volume contained in the convex body $C\subset\mathbb{R}^r$ if and only if $\Ball_2\subset C$ and, for some $m\ge r$, there are unit $2$-norm vectors $\{\bu_1, \ldots, \bu_m\}\subset\mathbb{R}^r$ on the boundary of $C$, and positive numbers $\{c_i, i=1,\ldots, m\}$ for which $\sum_{i=1}^m c_i\bu_i=\mathbf{0}$ and $\sum_{i=1}^m c_i\bu_i\bu_i^T=\bI_r$.
\end{theorem}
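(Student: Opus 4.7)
The plan is to characterize $\Ball_2$ as the MVIE of $C$ via the first-order optimality conditions of the volume-maximization problem (an ellipsoid-parameterized generalization of (\ref{app:mvie}) allowing a non-polyhedral convex body). I would parameterize inscribed ellipsoids as $\mathcal{E}=\bA\Ball_2+\bg$ with $\bA\succ\mathbf{0}$, so that $\log\text{vol}(\mathcal{E})$ is, up to a constant, $\log\det\bA$. Via the support function $h_C$, the containment $\mathcal{E}\subseteq C$ is the family of constraints $\|\bA\bu\|_2+\langle\bu,\bg\rangle\le h_C(\bu)$ for all unit $\bu\in\mathbb{R}^r$, and because $\Ball_2\subseteq C$, the active constraints at the candidate point $(\bA,\bg)=(\bI_r,\mathbf{0})$ are exactly the contact points $\bu_i\in\text{bd}(\Ball_2)\cap\text{bd}(C)$, i.e., unit vectors $\bu_i$ with $h_C(\bu_i)=1$.

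For the sufficiency direction, I assume $\{(\bu_i,c_i)\}_{i=1}^m$ with the stated properties exist. A perturbation $(\bA,\bg)=(\bI_r+t\delta\bA,\,t\delta\bg)$ preserves feasibility to first order iff $\bu_i^T\delta\bA\bu_i+\langle\bu_i,\delta\bg\rangle\le 0$ for every $i$ (linearizing $\|\bu_i+t\delta\bA\bu_i\|_2\le 1-t\langle\bu_i,\delta\bg\rangle$), while the objective perturbs as $\log\det(\bI_r+t\delta\bA)=t\,\text{tr}(\delta\bA)+O(t^2)$. Substituting $\bI_r=\sum_i c_i\bu_i\bu_i^T$ and then $\sum_i c_i\bu_i=\mathbf{0}$ gives
\begin{equation*}
\text{tr}(\delta\bA)=\sum_i c_i\,\bu_i^T\delta\bA\bu_i\le -\sum_i c_i\langle\bu_i,\delta\bg\rangle=-\Bigl\langle\sum_i c_i\bu_i,\delta\bg\Bigr\rangle=0,
\end{equation*}
so every feasible direction is non-increasing for $\log\det$ at $(\bI_r,\mathbf{0})$. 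Concavity of $\log\det$ on the positive-definite cone then upgrades this local first-order property to global optimality, establishing that $\Ball_2$ is the MVIE.

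For the necessity direction, I assume $\Ball_2$ is the MVIE and apply KKT to the Lagrangian $L=-\log\det\bA+\sum_i c_i(\|\bA\bu_i\|_2+\langle\bu_i,\bg\rangle-1)$ on the active set with multipliers $c_i\ge 0$. Stationarity in $\bg$ yields $\sum_i c_i\bu_i=\mathbf{0}$, and stationarity in $\bA$, using $\nabla_\bA\|\bA\bu\|_2=\bA\bu\bu^T/\|\bA\bu\|_2$ and $\nabla_\bA\log\det\bA=\bA^{-T}$ evaluated at $\bA=\bI_r$, yields $\bI_r=\sum_i c_i\bu_i\bu_i^T$; the rank of the right-hand side forces $m\ge r$, and dropping any inactive constraints gives strict positivity of the $c_i$'s.

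The main obstacle I expect is justifying the KKT step when $C$ is a general convex body rather than a polytope, because the supporting-halfspace constraint family is indexed by an uncountable set of unit normals and off-the-shelf finite-dimensional KKT theorems do not directly apply. I would handle this either by approximating $C$ by a sequence of circumscribed polytopes, applying (\ref{app:mvie}) and its KKT conditions to each, and extracting a weak$^*$ limit of the certificates $\{(\bu_i,c_i)\}$, or by a Carathéodory-type reduction on the convex cone generated by the rank-one outer products $\bu\bu^T$ in the $\binom{r+1}{2}$-dimensional symmetric-matrix space (together with the $r$-dimensional $\sum c_i\bu_i=\mathbf{0}$ constraint), keeping the active support finite with $m\le\binom{r+1}{2}+r$.
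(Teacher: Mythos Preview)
The paper does not give its own proof of this statement: Theorem~\ref{johnstheorem} is quoted as Fritz John's theorem, with a citation to \cite{ball1992ellipsoids}, and is used only as a black box to verify that $\Ball_2$ is the MVIE of $\Ball_\infty$ and of $\sqrt{r}\Ball_1$. So there is no ``paper's own proof'' to compare against.

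That said, your sketch is a correct and standard route to John's theorem. The sufficiency direction is clean: since the feasible set $\{(\bA,\bg):\|\bA^T\bu\|_2+\langle\bu,\bg\rangle\le h_C(\bu)\ \forall\,\|\bu\|_2=1\}$ is convex and $\log\det$ is concave, showing that every linearized feasible direction at $(\bI_r,\mathbf{0})$ has $\text{tr}(\delta\bA)\le 0$ is exactly what is needed for global optimality, and your multiplier identity delivers this. (Minor slip: the containment constraint is $\|\bA^T\bu\|_2+\langle\bu,\bg\rangle\le h_C(\bu)$, not $\|\bA\bu\|_2$, but at $\bA=\bI_r$ the first-order expansions coincide since $\bu^T\delta\bA\,\bu=\bu^T\delta\bA^T\bu$.) For necessity, the obstacle you flag is real, and the usual remedy is the one you propose second: view the optimality condition as a separation/KKT statement in the finite-dimensional space $\mathbb{R}^r\times\mathcal{S}^r$, where the pairs $(\bu,\bu\bu^T)$ over contact points $\bu$ generate a compact set, and then apply Carath\'eodory to extract a finite certificate with $m\le r+\binom{r+1}{2}$ and $m\ge r$ forced by $\text{rank}\bigl(\sum_i c_i\bu_i\bu_i^T\bigr)=r$. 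This is essentially Ball's argument. Your polytope-approximation alternative also works but is heavier than necessary. It is worth noting that the paper does carry out a closely related KKT computation in Appendix~\ref{app:mviesparsenonnegative} for the polyhedral MVIE problem~(\ref{app:mvie}); your argument is the natural non-polyhedral extension of that same mechanism.
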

Based on Theorem \ref{johnstheorem}, we can show that the special polytope $\Ball_\infty$ (item (i) in Section \ref{sec:PMFsetup}.) has $\Ball_2$ as its MVIE, with the choices of $\bu_i=\be_i,i=1, \ldots, r$ and $\bu_{i}=-\be_{i-r}, i=r+1, \ldots 2r$, $c_i=1, i=1, \ldots, r$. Similarly, the MVIE of $\sqrt{r}\Ball_1$ is also $\Ball_2$, which can be justified by Theorem 1, through the choice of $\bu_i=\frac{1}{\sqrt{r}}\bq_i, i=1, \ldots, 2^r$, where vectors $\bq_i$ are all possible distinct sign vectors with $\pm 1$ entries. 

We propose the following MVIE-based sufficiently scattering condition to be used in the identifiability results.
\begin{definition}\label{suffscat}{\it Sufficiently Scattered Factor}: 
 $\bS \in \mathbb{R}^{r \times N}$ is called a sufficiently scattered factor corresponding to $\mathcal{P}$ if
\begin{itemize}
\item[]{\it (PMF.SS.i)} $\mathcal{P} \supseteq \text{conv}(\bS)\supset \mathcal{E}_\mathcal{P}$, and
\item[]{\it (PMF.SS.ii)} $\text{conv}(\bS)^{*,\bg_\mathcal{P}} \cap \text{bd}(\mathcal{E}_\mathcal{P}^{*,\bg_\mathcal{P}})=\text{ext}(\mathcal{P}^{*,\bg_\mathcal{P}})$,
\end{itemize}
where $\mathcal{E}_\mathcal{P}$ is the MVIE of $\mathcal{P}$, centered at $\bg_\mathcal{P}$.
\end{definition}
\begin{figure}[H]
\centering
\begin{subfigure}[b]{\figsize\textwidth}
\includegraphics[width=0.79\textwidth,trim={0cm 1.05cm 2.0 0.95cm},clip]{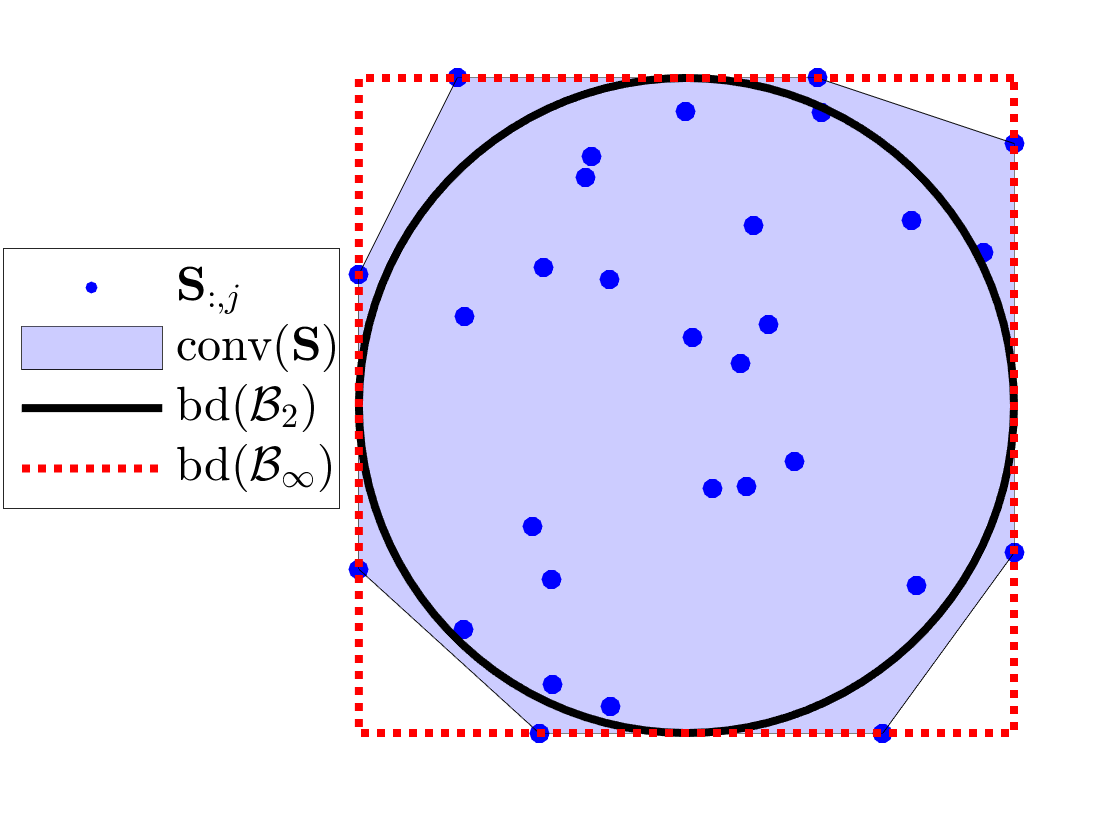}
\end{subfigure}
\caption{{\it Sufficiently scattered case} example for $2$D antisparse PMF.}
\label{fig:suffsc2D}
\end{figure}
The condition (PMF.SS.i) in Definition \ref{suffscat} guarantees that the convex hull of the columns of $\bS$ contains the MVIE of $\mathcal{P}$. Figure \ref{fig:suffsc2D} illustrates a set of sufficiently scattered samples for $\Pcal=\Ball_\infty \subset \mathbb{R}^2$. Here the square region (with red borders) is $\Ball_\infty$ polytope, the circle is the boundary of its MVIE $\Ball_2$, the dots represent the sufficiently scattered samples, and the purple shaded region is the convex hull of these samples. One can consider MVIE in Figure \ref{fig:suffsc2D} to be an ellipsoidal approximation of the polytope. The condition (PMF.SS.i) essentially ensures that the convex hull of the samples forms a better approximation of $\Pcal$ than its MVIE.
\begin{figure}[ht]
\centering
\begin{subfigure}[b]{\figsize\textwidth}
\includegraphics[width=0.78\textwidth,trim={1.5cm 1.7cm 2.0 1.3cm},clip]{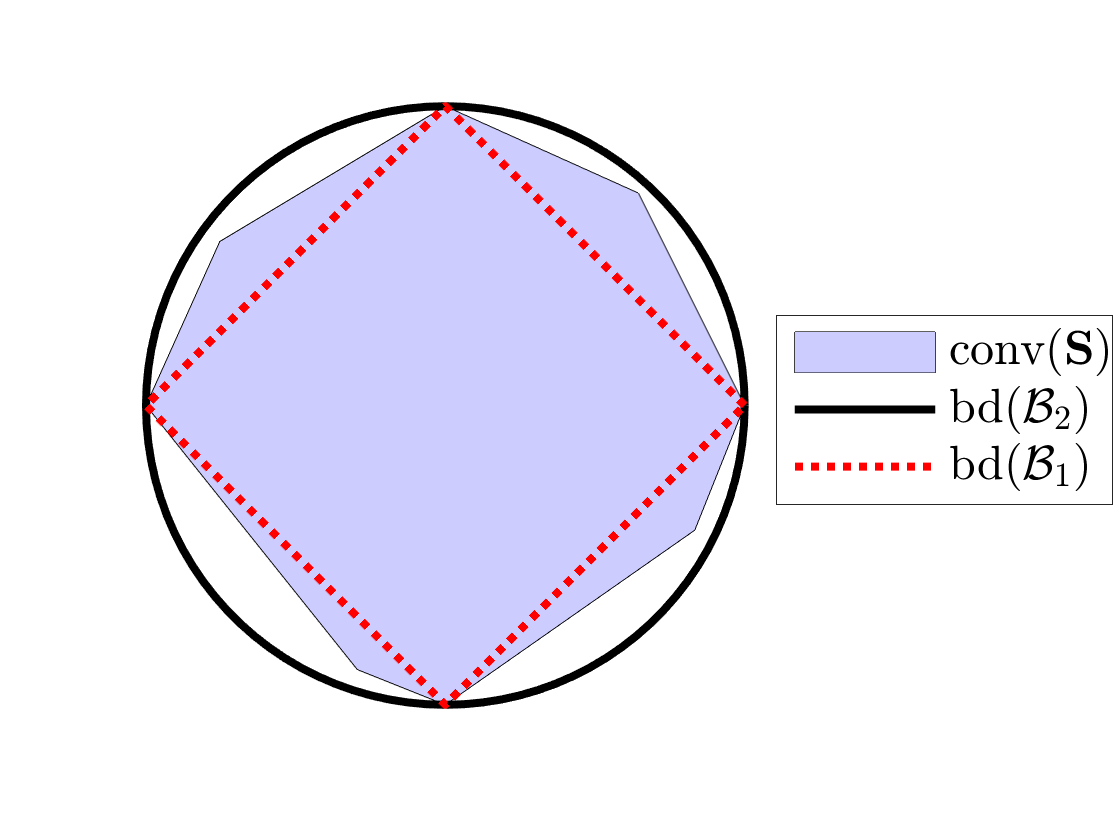}
\end{subfigure}
\caption{Polar sets for the example in Figure \ref{fig:suffsc2D}.}
\label{fig:suffsc2Dpolar}
\end{figure}
Furthermore,  the polar domain constraint (PMF.SS.ii) in Definition \ref{suffscat} places a restriction on the intersection between $\text{conv}(\bS)^{*,\bg_\mathcal{P}}$ and $\text{bd}(\mathcal{E}_\mathcal{P}^{*,\bg_\mathcal{P}})$. Figure \ref{fig:suffsc2Dpolar} provides the polar domain picture corresponding to the example in Figure \ref{fig:suffsc2D}. The polar of $\Ball_\infty$ is $\Ball_1$, the boundary of which  is shown with the dotted (red) lines. This is due to the fact that the face normals of $\Ball_{\infty}$, which are the standard basis vectors and their negatives, are the vertices of the polar polytope (see Appendix \ref{app:convex}). The polar of the MVIE $\Ball_2$ is equal to itself. Here, we can clearly observe the reversal of the inclusion relationship $\mathcal{P}\supseteq \text{conv}(\bS)\supset \mathcal{E}_\mathcal{\Pcal}$ in the sample domain as $\mathcal{E}_\Pcal^{*,\bg_\Pcal}\supset\text{conv}(\bS)^{*,\bg_\mathcal{P}}\supseteq \mathcal{P}^{*,\bg_\mathcal{P}}$ in the polar domain. Furthermore, we observe from Figure \ref{fig:suffsc2Dpolar} that the boundary of $\mathcal{E}_{\Ball_\infty}^{*}$ intersects  $\text{conv}(\bS)^{*}$ at the vertices of $\Ball_1$, i.e., the standard basis vectors and their negatives. These intersection points in the polar domain correspond to the normals of hyperplanes where $\text{conv}(\bS)$ and $\Ball_\infty$ are tangent to the boundary of the MVIE $\Ball_2$ in Figure \ref{fig:suffsc2D}.  As illustrated by this example in Figures \ref{fig:suffsc2D} and \ref{fig:suffsc2Dpolar}, the polar domain constraint  (PMF.SS.ii) limits the points of tangency between $\text{conv}(\bS)$ and  $\mathcal{E}_\mathcal{P}$ to the intersection of the polytope $\Pcal$ and the boundary of its MVIE $\text{bd}(\mathcal{E}_\mathcal{\Pcal})$.  Therefore, we can consider (PMF.SS.ii) to be a constraint on how tightly $\text{conv}(\bS)$ can enclose $\mathcal{E}_\mathcal{\Pcal}$.
\section{Special PMF Cases}
 \label{sec:specialPMF}
This section focuses on the special polytopes introduced in Section \ref{sec:PMFproblem} corresponding to the combination of multiple component attributes such as sparse/antisparse and nonnegative/signed due to their practical relevance in existing applications. We provide the generalization of all of the identifiable polytopes in Section \ref{sec:generalizedpmf}.
\subsection{Antisparse PMF}
\label{sec:antisparse}
The antisparse case corresponds to the setting in which the columns of the $\bS_g$ matrix are distributed inside the $\ell_\infty$-norm-ball; i.e., $\Pcal=\Ball_\infty$ as defined in item (i) in Section \ref{sec:PMFproblem}.
In the sufficiently scattered case,  $\bS_g$ has columns for which  near-maximum magnitude values are simultaneously achieved for all of their components, hence the name antisparse  \cite{fuchs2011spread,jegou2012anti}. Such factorization has also been referred to as ``democratic representations"  \cite{studer2014democratic}. The reference \cite{erdogan2013class} proposed a BCA framework that exploited the use of $\Ball_\infty$ as the domain of latent vectors. This approach also employs the determinant maximization criterion. However, instead of defining an optimization problem with a $\Ball_\infty$ constraint, it proposes an unconstrained optimization problem. The antisparse BCA objective function contains a penalty term corresponding to the ``size" of the minimum volume $\Ball_\infty$ polytope enclosing latent vectors. The identifiability results offered in \cite{erdogan2013class} assumed  that the latent vectors in the generative model contain all of the vertices of $\Ball_\infty$. In this section, we  show that we can replace the vertex-inclusion assumption with the less stringent  sufficiently scattering assumption in Definition \ref{suffscat} for  $\Ball_\infty$. In Section \ref{sec:proposedidentifiability}, using Theorem \ref{johnstheorem}, we showed that the MVIE for $\Ball_\infty$ is $\Ball_2$. Therefore, the corresponding MVIE parameters in description  (\ref{def:ellipsoid}) are $\bC_{\Ball_\infty}=\mathbf{I}$ and $\bg_{\Ball_\infty}=\mathbf{0}$.

\begin{figure}[!ht]
    \centering
    \begin{minipage}[b]{0.18\textwidth}
    \includegraphics[width=0.99\textwidth,trim={1.2cm 0.85cm 1.65cm 0.85cm},clip]{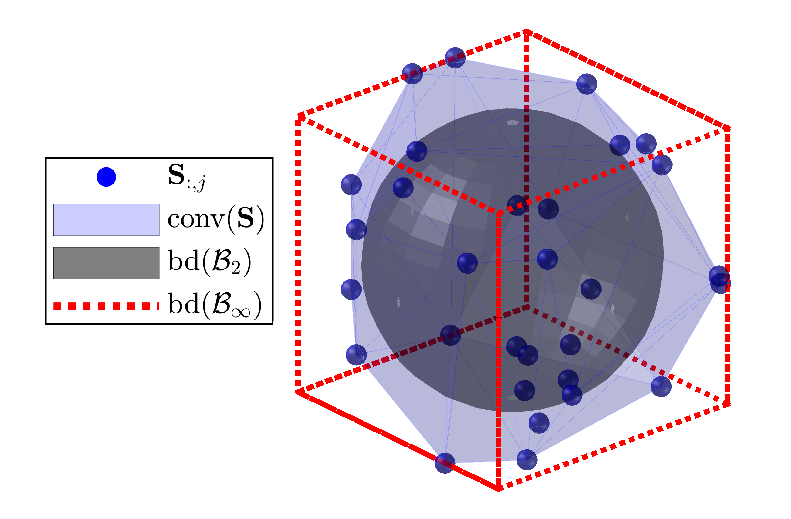}
    \caption*{(a)}
    \label{fig:as3d1}
    \end{minipage}\hfill
    \begin{minipage}[b]{0.205\textwidth}
   \includegraphics[width=0.99\textwidth,trim={1.0cm 0.9cm 1.6cm 0.95cm},clip]{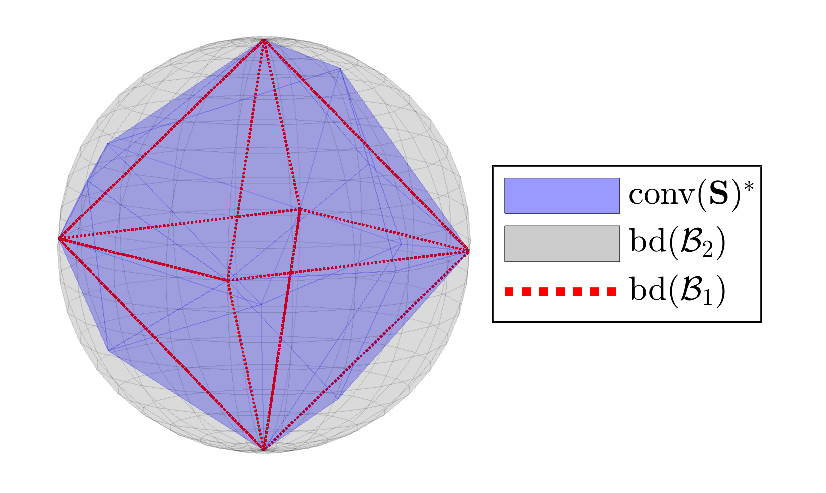}
    \caption*{(b)}
    \label{fig:a3d2}
    \end{minipage}
    \caption{Sufficiently scattered case example for $3$D antisparse PMF: (a) sample domain, (b) polar domain.}
    \label{fig:suffsc3D}
\end{figure}

In Section \ref{sec:proposedidentifiability}, Figures \ref{fig:suffsc2D} and \ref{fig:suffsc2Dpolar} offered illustrations for the convex hull of sufficiently scattered samples in $\Ball_\infty$, the polytope $\Ball_\infty$, its MVIE  $\mathcal{E}_{\Ball_\infty}$ and their polars, in a two-dimensional setting. 
Figure \ref{fig:suffsc3D}(a) illustrates a sufficiently scattered selection of the columns of $\bS$ for the three-dimensional case ($r=3$), where the vertices of $\Ball_\infty$ are not included in the samples. The corresponding three-dimensional polar domain picture is provided in Figure \ref{fig:suffsc3D}(b).

The following theorem characterizes the identifiability of the antisparse PMF problem under the proposed sufficiently scattered condition on $\bS_g$.
\begin{theorem}
\label{thm1}
Given the general PMF setting outlined in Section \ref{sec:PMFproblem}, if $\bS_g$ is a sufficiently scattered factor for {\it antisparse PMF} according to Definition \ref{suffscat}, then all global optima $\bH_*,\bS_*$ of the  Det-Max optimization problem in (\ref{eq:detmaxoptimization}) for $\Pcal=\Ball_\infty$ satisfy
\begin{eqnarray}
\bH_*&=&\bH_g\bPi^T\bD, \label{eq:thm1}\\
\bS_*&=&\bD\bPi\bS_g,  \label{eq:thm2}
\end{eqnarray}
where $\bPi\in\mathbb{R}^{r \times r}$ is a permutation matrix and $\bD\in\mathbb{R}^{r \times r}$ is an invertible diagonal matrix with $\pm 1$ entries on its diagonal.
\end{theorem}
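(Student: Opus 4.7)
The plan is to re-parametrize the optimization in terms of a single invertible matrix $\bA\in\mathbb{R}^{r\times r}$, reduce the Det-Max criterion to the volume of an inscribed ellipsoid of $\Ball_\infty$, invoke uniqueness of the MVIE to force $\bA$ orthogonal, and finally use the polar constraint (PMF.SS.ii) to upgrade orthogonality to a signed permutation.

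First, since $\bH_g$ is full column rank and $\bY=\bH_g\bS_g=\bH_*\bS_*$, and since a positive objective value requires $\bS_*$ to have full row rank $r$, standard linear-algebra manipulation produces an invertible $\bA\in\mathbb{R}^{r\times r}$ with $\bS_*=\bA\bS_g$ and $\bH_*=\bH_g\bA^{-1}$. Plugging in, I get $\det(\bS_*\bS_*^T)=(\det\bA)^2\det(\bS_g\bS_g^T)$, so the Det-Max problem reduces to maximizing $|\det\bA|$ over invertible $\bA$ such that every column of $\bA\bS_g$ lies in $\Ball_\infty$.

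Next, by condition (PMF.SS.i) and the identification $\mathcal{E}_{\Ball_\infty}=\Ball_2$ established via Theorem~\ref{johnstheorem}, one has $\Ball_2\subset\text{conv}(\bS_g)$. Applying $\bA$ and using linearity, $\bA(\Ball_2)\subset\bA(\text{conv}(\bS_g))=\text{conv}(\bA\bS_g)\subset\Ball_\infty$, so $\bA(\Ball_2)$ is an origin-centered ellipsoid inscribed in $\Ball_\infty$ with volume $|\det\bA|\cdot\mathrm{vol}(\Ball_2)$. The MVIE property of $\Ball_2$ then yields $|\det\bA|\le 1$, and since $\bA=\bI$ is feasible, the optimum is $|\det\bA|=1$. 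By strict concavity of $\log\det$ in the defining program (\ref{app:mvie}), the MVIE of $\Ball_\infty$ is unique, hence $\bA(\Ball_2)=\Ball_2$, which forces $\bA^T\bA=\bI$.

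Finally, I invoke (PMF.SS.ii). With $\bg_{\Ball_\infty}=\mathbf{0}$ and the self-duality $\Ball_2^*=\Ball_2$, the condition reads $\text{conv}(\bS_g)^*\cap\text{bd}(\Ball_2)=\text{ext}(\Ball_1)=\{\pm\be_i\}_{i=1}^r$. Taking polars of $\text{conv}(\bA\bS_g)\subseteq\Ball_\infty$ and using orthogonality of $\bA$ gives $\bA^T(\Ball_1)=\bA^{-1}(\Ball_1)\subseteq\text{conv}(\bS_g)^*$; the $2r$ vertices $\pm\bA^T\be_i$ of $\bA^T(\Ball_1)$ have unit $\ell_2$-norm by orthogonality, so they lie in $\text{conv}(\bS_g)^*\cap\text{bd}(\Ball_2)=\{\pm\be_j\}$. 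Hence each column of $\bA^T$ is a signed standard basis vector, and orthogonality makes the assignment a bijection, so $\bA=\bD\bPi$ with $\bPi$ a permutation and $\bD$ diagonal with $\pm 1$ diagonal entries. Substituting back gives $\bS_*=\bD\bPi\bS_g$ and $\bH_*=\bH_g\bPi^T\bD^{-1}=\bH_g\bPi^T\bD$.

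The main obstacle I anticipate is the last step: the polar-set bookkeeping must be done carefully to translate ``columns of $\bA\bS_g$ lie in $\Ball_\infty$'' into an inclusion in the polar domain that can be intersected with $\text{bd}(\Ball_2)$ and matched to $\text{ext}(\Ball_1)$. Once the polar relation $\bA^T(\Ball_1)\subseteq\text{conv}(\bS_g)^*$ is established, (PMF.SS.ii) does the rest, but writing this out requires precise use of the identity $(\bA(\mathcal{S}))^{*,\mathbf{0}}=\bA^{-T}(\mathcal{S}^{*,\mathbf{0}})$ for linear bijections.
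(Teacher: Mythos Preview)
Your argument is correct. The reparametrization in terms of $\bA$, the upper bound $|\det\bA|\le 1$, and the conclusion that each row of $\bA_*$ is a signed standard basis vector all go through.

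The route differs from the paper's mainly in how orthogonality of $\bA_*$ is obtained. The paper argues row by row: from $\bA_{i,:}\bS_{g,:,j}\le 1$ it deduces $(\bA_{i,:})^T\in\text{conv}(\bS_g)^*$, then uses the polar version of (PMF.SS.i), namely $\text{conv}(\bS_g)^*\subset\Ball_2$, to get $\|\bA_{i,:}\|_2\le 1$, and finishes with Hadamard's inequality. You instead observe that (PMF.SS.i) gives $\bA(\Ball_2)\subset\Ball_\infty$, compare volumes against the MVIE, and invoke uniqueness of the MVIE to conclude $\bA_*(\Ball_2)=\Ball_2$. Your volume/uniqueness argument is exactly the mechanism the paper uses later in the proof of the general Theorem~\ref{thm:generalized}; it is more conceptual and generalizes directly to arbitrary identifiable polytopes, whereas the Hadamard route exploits the specific fact that for $\Ball_\infty$ the polar MVIE is $\Ball_2$ so the row-norm bound is immediate. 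In the final step both arguments coincide: the paper places the rows of $\bA_*$ directly in $\text{conv}(\bS_g)^*\cap\text{bd}(\Ball_2)$, while you phrase this as the vertices of $\bA_*^T(\Ball_1)$ landing in that set; these are the same points.
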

\begin{proof}
Due to the full column rank condition on $\bH_g$ and the  constraint $\bY=\bH\bS$ in (\ref{eq:detmaxconstr1}), any feasible point $\bS$ has the same row space as $\bS_g$, which implies
\begin{eqnarray}
\bS=\bA\bS_g, \label{eq:feasibleS}
\end{eqnarray}
for some full rank $\bA\in \mathbb{R}^{r \times r}$  matrix. Therefore, finding the optimal choice of $\bS$ boils down to  finding the optimal  $\bA$.
Using the parametrization in (\ref{eq:feasibleS}), the  objective function in (\ref{eq:detmaxobjective}) of the  Det-Max optimization problem can be written as
\begin{eqnarray*}
\det(\bS\bS^T)=|\det(\bA)|^2\det(\bS_g\bS_g^T).
\end{eqnarray*}
Since the second term on the right-hand side is constant, the optimization objective function can be reduced to $|\det(\bA)|$. Therefore, we can write the equivalent problem of (\ref{eq:detmaxoptimization}) for the antisparse PMF case as
\begin{maxi!}[l]<b>
{\bA\in \mathbb{R}^{r \times r}}{|\det(\bA)|\label{eq:antisparseeqobjective}}{\label{antisparseeqproblem}}{}
\addConstraint{\|\bA{\bS_g}_{:,j}\|_\infty}{\le 1,\quad}{j=1, \ldots, N. \label{eq:eqantisparseconstr}}
\end{maxi!}
The remaining proof consists of three main steps.

\uline{The first step:}\textit{ Using the constraint in (\ref{eq:eqantisparseconstr}), we first show that the rows of any feasible $\bA$ are in $\text{conv}(\bS_g)^*$}.

For this purpose, using (\ref{eq:eqantisparseconstr}), we can write  $\bA_{i,:}{\bS_g}_{:,j}\le 1$,
for all $(i,j)$ index pairs, which  further  implies that $\bA_{i,:}$ satisfies
\begin{eqnarray}
\bA_{i,:}{\bs} =\bA_{i,:}\bS_g\blambda\le 1, \hspace{0.1in} \blambda\in \mathbb{R}_+^N, \hspace{0.02in}\mathbf{1}^T\blambda=1, \nonumber
\end{eqnarray}
for all $\bs\in \text{conv}(\bS_g)$ and $i \in \{1, \ldots, r\}$. This condition is equivalent to each row of $\bA$ lying in the polar of $\text{conv}(\bS_g)$, i.e.,
\begin{eqnarray*}
 \bA^T_{i,:}\in \text{conv}(\bS_g)^*, \hspace{0.1in} \forall i \in \{1, \ldots, r\}.
 \end{eqnarray*}
 In reference to the three-dimensional polar domain picture in Figure \ref{fig:suffsc3D}(b), the rows of $\bA$ lie in the polytopic shaded region corresponding to $\text{conv}(\bS_g)^*$.
 
 \uline{The second step:} \textit{ Using (PMF.SS.i) and Hadamard's inequality, we show that any optimal solution $\bA_*$ of (\ref{antisparseeqproblem}) should be a real orthogonal matrix.}
 
The polar version of the sufficient scattering condition (PMF.SS.i), i.e.,  $\text{conv}(\bS_g)\supset \mathcal{E}_\Pcal$, is equivalent to $\text{conv}(\bS_g)^*\subset \Ball_2$. Therefore, we have $\|\bA_{i,:}\|_2\le 1$ for all rows of $\bA$.   This finding  implies that the rows of $\bA$ lie inside the unit sphere in Figure \ref{fig:suffsc3D}(b).
Hadamard's inequality-based bound
on the objective function in (\ref{eq:antisparseeqobjective}),
\begin{eqnarray*}
|\det(\bA)|\le  \|\bA^T_{1,:}\|_2 \|\bA^T_{2,:}\|_2 \ldots \|\bA^T_{r,:}\|_2\le 1, 
 \end{eqnarray*}
is achieved if and only if the rows of $\bA$ are on $\text{bd}(\Ball_2)$ and they are mutually orthogonal. In other words, any optimal solution $\bA_*$ is a real orthogonal matrix. Therefore, for the example case in Figure \ref{fig:suffsc3D}, the rows of $\bA_*$  should lie on the boundary of the unit sphere in Figure \ref{fig:suffsc3D}(b). At the same time,  they  should be members of $\text{conv}(\bS_g)^*$, the purple shaded region in the same figure. 

\uline{The third step:} \textit{ Using  the polar domain constraint (PMF.SS.ii), we show that any optimal real orthogonal $\bA_*$ has only one nonzero element at each row (column)}.

The sufficiently scattered condition (PMF.SS.ii), $\text{conv}(\bS_g)^*\cap \text{bd}(\Ball_2)=\text{ext}(\Ball_1)$, 
restricts the rows of any optimal solution $\bA_*$, which are located on the boundary of the unit sphere in Figure \ref{fig:suffsc3D}(b), to the vertices of $\Ball_1$. This condition  
implies that  the rows of $\bA_*$ are standard basis vectors (or their negatives). Therefore, we can write
 $\bA_*=\bD\bPi$,
where $\bD\in\mathbb{R}^{r \times r}$ is a diagonal matrix with $\pm 1$ entries on its diagonal and $\bPi\in\mathbb{R}^{r \times r}$ is a permutation matrix. Due to the equality constraint in  (\ref{eq:detmaxconstr1}), $\bH_*=\bH_g\bA^{-1}$; therefore,  (\ref{eq:thm1}) and (\ref{eq:thm2})  follow.\end{proof}

\subsection{Sparse PMF}
\label{sec:sparse}
In the sparse PMF setting defined in item (ii) in Section \ref{sec:PMFproblem}, the columns of $\bS_g$ are located inside the $\ell_1$-norm-ball; i.e., $\Pcal=\Ball_1$. The connection between sparsity and  $\ell_1$-norm constraints  was well established in   \cite{donoho2006most,candes2008introduction}. It was shown that, under some practically plausible assumptions, the $\ell_1$-norm  acts as a convex surrogate for the $\ell_0$-norm which counts the number of non-zero elements in a given vector.
The sparsity property has also been exploited in different unsupervised  approaches (see, for example, \cite{georgiev2005sparse, elad2010sparse,comon2010handbook} and references therein). In particular,  \cite{babatas2018algorithmic} adopted the determinant maximization based  antisparse BCA approach in  \cite{erdogan2013class} to the sparse case by replacing the minimum volume  enclosing $\Ball_\infty$ with its $\Ball_1$ counterpart. The identifiability result for the sparse BCA  in \cite{babatas2018algorithmic} is based on the assumption that $\bS_g$ contains all vertices of $\Ball_1$. In this section, we show that we can relax this condition  using the sufficiently scattered condition in Definition \ref{suffscat}.

In Section \ref{sec:proposedidentifiability}, using Theorem \ref{johnstheorem}, we  show that the MVIE of $\sqrt{r}\Ball_1$ is $\Ball_2$. Therefore, the MVIE of $\Ball_1$ is $\mathcal{E}_{\Ball_1}=\frac{1}{\sqrt{r}}\Ball_2$. The polar of a hypersphere is another hypersphere with the reciprocal radius. Therefore, $\mathcal{E}_{\Ball_1}^*=\sqrt{r}\Ball_2$.

\begin{figure}[!ht]
    \centering
    \begin{minipage}[b]{0.22\textwidth}
    \includegraphics[width=0.99\textwidth,trim={1.2cm 0.85cm 1.65cm 0.8cm},clip]{Sparse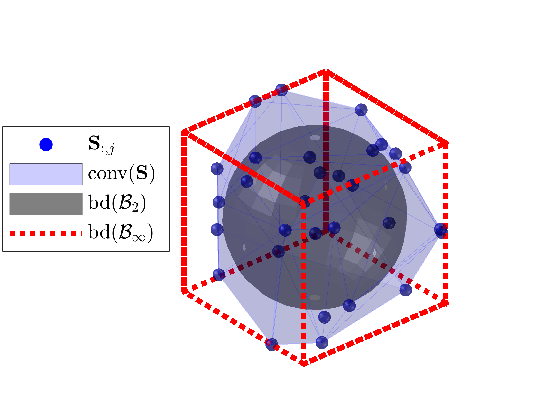}
    \caption*{(a)}
    \label{fig:s3d1}
    \end{minipage}\hfill
    \begin{minipage}[b]{0.225\textwidth}
   \includegraphics[width=0.99\textwidth,trim={1.0cm 0.9cm 1.2cm 0.95cm},clip]{Sparse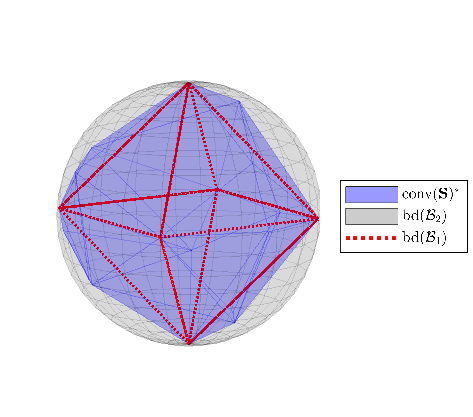}
    \caption*{(b)}
    \label{fig:s3d2}
    \end{minipage}
    \caption{Sufficiently scattered case example for $3$D sparse PMF: (a) sample domain, (b) polar domain.}
    \label{fig:sparsesuffsc3D}
\end{figure}

For a visual illustration of the sufficient scattering condition for $\Pcal=\Ball_1$, we consider the example in Figure \ref{fig:sparsesuffsc3D}(a)  for $r=3$. The sample points, represented by dots, in Figure \ref{fig:sparsesuffsc3D}(a) do not contain the vertices of $\Ball_1$. Furthermore, both $\text{bd}(\Ball_1)$ and the edges of $\text{conv}(\bS)$ intersect $\text{bd}(\frac{1}{\sqrt{r}}\Ball_2)$ at identical points due to the polar domain sufficient scattering constraint (PMF.SS.ii), as illustrated in Figure \ref{fig:sparsesuffsc3D}(b). The polar of $\Ball_1$ is $\Ball_\infty$, the boundary of which is plotted as the red cube in Figure \ref{fig:sparsesuffsc3D}(b). Its vertices and $\text{conv}(\bS)^*$ intersect the boundary of $\mathcal{E}_\Pcal^*$ at identical points.

 The following theorem characterizes the  identifiability of the sparse PMF problem, based on the sufficiently scattering condition in Definition \ref{suffscat}:
\begin{theorem}
Given the general PMF setting outlined in Section \ref{sec:PMFproblem}, if $\bS_g$ is a sufficiently scattered factor for {\it sparse PMF} according to Definition \ref{suffscat}, then all global optima $\bH_*,\bS_*$ of the Det-Max optimization problem in (\ref{eq:detmaxoptimization}) with $\Pcal=\Ball_1$ satisfy
\begin{eqnarray*}
\bH_*&=&\bH_g\bPi^T\bD,\\
\bS_*&=&\bD\bPi\bS_g,
\end{eqnarray*}
where $\bPi\in\mathbb{R}^{r \times r}$ is a permutation matrix and $\bD\in\mathbb{R}^{r \times r}$ is a diagonal matrix with $\pm 1$ entries on its diagonal.
\end{theorem}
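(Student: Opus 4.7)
The plan is to follow the three-step structure of the antisparse proof (Theorem \ref{thm1}) while replacing its row-wise Hadamard argument with a sign-vector averaging trick; this is needed because the $\ell_1$ constraint couples all rows of $\bA$ together, breaking the decoupling that made Hadamard's inequality directly applicable in the antisparse case. First, exactly as in the antisparse proof, the full-column-rank assumption on $\bH_g$ implies any feasible $\bS$ has the same row space as $\bS_g$, so $\bS = \bA \bS_g$ for some full-rank $\bA \in \mathbb{R}^{r\times r}$, and the Det-Max objective reduces to $|\det(\bA)|$ subject to $\|\bA {\bS_g}_{:,j}\|_1 \le 1$ for all $j$. Using the dual-norm identity $\|\bx\|_1 = \max_{\bq \in \text{ext}(\Ball_\infty)} \bq^T \bx$, this constraint is equivalent to $(\bA^T \bq)^T \bs \le 1$ for every $\bs \in \text{conv}(\bS_g)$ and every $\bq \in \{-1,+1\}^r$, i.e., $\bA^T \bq \in \text{conv}(\bS_g)^*$ for every sign vector $\bq$.

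Next I would bound $|\det(\bA)|$. By (PMF.SS.i) and $\mathcal{E}_{\Ball_1}^* = \sqrt{r}\Ball_2$, we have $\text{conv}(\bS_g)^* \subset \sqrt{r}\Ball_2$, so $\|\bA^T \bq\|_2^2 \le r$ for every $\bq \in \{-1,+1\}^r$. Summing over all $2^r$ sign vectors and invoking the identity $\sum_{\bq \in \{-1,+1\}^r} \bq \bq^T = 2^r \bI_r$ gives $2^r \, \text{tr}(\bA \bA^T) = \sum_{\bq} \|\bA^T \bq\|_2^2 \le 2^r r$, hence $\|\bA\|_F^2 \le r$. An AM-GM inequality applied to the eigenvalues of $\bA \bA^T$ then yields $|\det(\bA)|^2 = \det(\bA\bA^T) \le (\|\bA\|_F^2/r)^r \le 1$, with equality requiring both $\bA \bA^T = \bI_r$ (so $\bA$ is real orthogonal) and $\|\bA^T \bq\|_2 = \sqrt{r}$ for every vertex $\bq$ of $\Ball_\infty$. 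Thus any optimum $\bA_*$ is orthogonal and each $\bA_*^T \bq$ lies on $\text{bd}(\sqrt{r}\Ball_2) = \text{bd}(\mathcal{E}_{\Ball_1}^*)$.

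In the third step I would apply the polar constraint (PMF.SS.ii), which with $\Ball_1^* = \Ball_\infty$ reads $\text{conv}(\bS_g)^* \cap \text{bd}(\mathcal{E}_{\Ball_1}^*) = \text{ext}(\Ball_\infty) = \{-1,+1\}^r$. Combined with $\bA_*^T \bq \in \text{conv}(\bS_g)^*$ and $\|\bA_*^T \bq\|_2 = \sqrt{r}$ from the earlier steps, this forces $\bA_*^T \bq \in \{-1,+1\}^r$ for every $\bq \in \{-1,+1\}^r$. A short argument, writing $2\bA_*^T \be_i = \bA_*^T \mathbf{1} - \bA_*^T(\mathbf{1} - 2\be_i)$, shows that each column of $\bA_*^T$ has entries in $\{-1,0,1\}$; combined with unit $\ell_2$-norm from orthogonality, exactly one entry per column is $\pm 1$. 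Hence $\bA_*^T$, and therefore $\bA_*$, is a signed permutation matrix, which can be rewritten as $\bD \bPi$ for a $\pm 1$ diagonal $\bD$ and a permutation $\bPi$. The two claimed identities then follow from $\bS_* = \bA_* \bS_g$, $\bH_* = \bH_g \bA_*^{-1}$, and $\bD^{-1} = \bD$.

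The hard part is the second step: because the $\ell_1$ constraint does not split across the rows of $\bA$, Hadamard's inequality cannot be applied row-by-row as in the antisparse case. The sign-vector averaging identity $\sum_\bq \bq \bq^T = 2^r \bI_r$ is the essential device that converts the $2^r$ pointwise $\ell_2$ bounds into a single Frobenius norm bound, making the AM-GM-based determinant inequality viable and preserving the ``orthogonal matrix plus signed permutation'' pipeline that step three needs.
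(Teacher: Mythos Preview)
Your proposal is correct and follows essentially the same route as the paper: the same reduction to $\bA^T\bq\in\text{conv}(\bS_g)^*$ for all sign vectors $\bq$, the same sign-vector averaging identity $\sum_{\bq}\bq\bq^T=2^r\bI$ to obtain $\|\bA\|_F^2\le r$, and the same AM--GM/singular-value argument forcing $\bA_*$ to be real orthogonal. The only cosmetic difference is your finishing combinatorial step in Step~3 (the $2\bA_*^T\be_i=\bA_*^T\mathbf{1}-\bA_*^T(\mathbf{1}-2\be_i)$ trick) versus the paper's derivation that $\|{\bA_*}_{:,i}\|_1=\|{\bA_*}_{:,i}\|_2=1$; both are valid and equally short.
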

\begin{proof}
Using the same arguments in the proof of Theorem \ref{thm1}, we write the optimization problem equivalent to (\ref{eq:detmaxoptimization}) for the sparse PMF case as
\begin{maxi!}[l]<b>
{\bA\in \mathbb{R}^{r \times r}}{|\det(\bA)|\label{eq:sparseeqobjective}}{\label{sparseeqproblem}}{}
\addConstraint{\|\bA{\bS_g}_{\:,j}\|_1} {\le 1,\quad}{j=1, \ldots, N. \label{eq:eqsparseconstr}}
\end{maxi!}
The proof consists of three fundamental steps:

\uline{The first step:}\textit{ We show   that (\ref{eq:eqsparseconstr}) and (PMF.SS.i) imply any feasible $\bA^T$ maps $\Ball_\infty$  into  $\sqrt{r}\Ball_2$.}

We start by noting that  $\|\bA\bx\|_1=\langle \bx,\bA^T\text{sign}(\bA\bx)\rangle$, for any $\bx \in \mathbb{R}^r$. Therefore,  from  (\ref{eq:eqsparseconstr}), we can write
\begin{eqnarray}
 \langle {\bS_g}_{:,j},\bA^T\bq_i\rangle \le 1,\hspace{0.1in} j=1, \ldots, N, \hspace{0.03in} \forall\bq_i\in\text{ext}(\Ball_\infty), \label{eq:sparsepolarineq}
\end{eqnarray}
where we used  $\text{ext}(\Ball_\infty)$ as the set of all possible sign vectors.
From (\ref{eq:sparsepolarineq}), we conclude that
\begin{eqnarray}
\bA^T\bq_i\in \text{conv}(\bS_g)^*, \hspace{0.2in} \forall\bq_i\in\text{ext}(\Ball_\infty). \label{eqcond}
\end{eqnarray}
In other words, any feasible $\bA^T$ maps the vertices of $\Ball_\infty$ into  $\text{conv}(\bS_g)^*$, purple shaded polytopic region in Figure \ref{fig:sparsesuffsc3D}(b).
Since $\bS_g$ is a sufficiently scattered factor, we have  $\text{conv}(\bS_g)^*\subset \sqrt{r}\Ball_2$  by (PMF.SS.i). Therefore, 
\begin{eqnarray}
\bA^T\bq_i\in \sqrt{r}\Ball_2,\hspace{0.2in}
\forall\bq_i\in\text{ext}(\Ball_\infty). \label{eqcond2}
\end{eqnarray}
(\ref{eqcond2}) further implies
$\bA(\Ball_\infty)\subset\sqrt{r}\Ball_2$,  using  the convexity of $\Ball_2$ and that $\Ball_\infty$ is the convex hull of $\text{ext}(\Ball_\infty)$. Therefore, the image of $\Ball_\infty$ under $\bA^T$ lies inside the spherical region in Figure \ref{fig:sparsesuffsc3D}(b).

\uline{The second step:}\textit{ We show that any $\bA$ satisfying (\ref{eqcond2}) and maximizing $|\det(\bA)|$ is a real orthogonal matrix}

Replacing $\text{conv}(\bS_g)^*$ in (\ref{eqcond}) with the larger set $\sqrt{r}\Ball_2$, we obtain the following optimization problem, the solution of which provides an upper bound for (\ref{sparseeqproblem}):
\begin{maxi!}[l]<b>
{\bA\in \mathbb{R}^{r \times r}}{|\det(\bA)|\label{eq:sparseubobjective}}{\label{sparseub1problem}}{}
\addConstraint{\|\bA^T\bq_i\|_2^2} {\le r,\quad}{\forall\bq_i\in\text{ext}(\Ball_\infty).\label{sparseub1constr}}
\end{maxi!}
Further relaxing  (\ref{sparseub1constr}) by totaling its  individual constraints, we obtain an alternative optimization for obtaining another upper bound:
\begin{maxi!}[l]<b>
{\bA\in \mathbb{R}^{r \times r}}{|\det(\bA)|\label{eq:sparseub2objective}}{\label{sparseub2problem}}{}
\addConstraint{\sum_{\bq_i \in \text{ext}(\Ball_\infty)}\|\bA^T\bq_i\|_2^2} {\le  2^r r.\quad \label{eq:ub2constr}}{}
\end{maxi!}
We now show that globally optimal solutions for (\ref{sparseub2problem}) are real orthogonal matrices,  which are also feasible, and therefore optimal, for the  problem in (\ref{sparseub1problem}).
First, we note that the constraint (\ref{eq:ub2constr}) can be rewritten more compactly as
$Tr(\bA\bA^T\bQ\bQ^T)\le 2^r r$, where $\bQ=\left[\begin{array}{cccc} \bq_1 & \bq_2 & \ldots & \bq_{2^r} \end{array}\right] \in \mathbb{R}^{r \times 2^r}$. Due to the symmetry of the extreme points of $\text{ext}(\Ball_\infty)$, we have $\bQ\bQ^T=2^r\bI$.
Therefore, the constraint (\ref{eq:ub2constr}) is further simplified to
$\|\bA\|_F^2\le  r$,
which can be written in terms of  the singular values of $\bA$, \{$\sigma_i(\bA), i\in\{1,\ldots, r\}\}$, as
\begin{eqnarray}
\sqrt{\sum_{i=1}^r\sigma_i^2(\bA)}\le \sqrt{r}. \label{constrineq}
\end{eqnarray}
For the objective function in (\ref{eq:sparseub2objective}), we can write
\begin{eqnarray}
|\det(\bA)|&=& \prod_{i=1}^r \sigma_i(\bA),  \nonumber\\
 &\le& \left(\frac{1}{r}\sum_{i=1}^r \sigma_i(\bA)\right)^r, \label{ProofS1} \\
 &\le& \left(\frac{1}{r}\sqrt{\sum_{i=1}^r\sigma_i^2(\bA)}\sqrt{r}\right)^r, \label{ProofS2}\\
 &\le& 1, \label{ProofS3}
\end{eqnarray}
where (\ref{ProofS1}) is due to the arithmetic-geometric mean inequality, (\ref{ProofS2}) is due to the Cauchy-Schwarz inequality, and (\ref{ProofS3}) is due to (\ref{constrineq}). The equality holds if and only if $\sigma_1(\bA)=\sigma_2(\bA)= \ldots=\sigma_r(\bA)=1$, which is equivalent to the condition that $\bA$ is real orthogonal. We note that the real orthogonal matrix  $\bA=\mathbf{I}$ is a feasible point for (\ref{sparseeqproblem}). Therefore, the upper bound by (\ref{sparseub1problem}) and (\ref{sparseub2problem}), is achievable by the optimization in (\ref{sparseeqproblem}). Thus,   $\bA_*$ is  optimal solution of (\ref{eq:sparseeqobjective}) only if it is real orthogonal. What remains to be shown is that all of the global optima of (\ref{sparseeqproblem}) are real orthogonal matrices with the desired form.

\uline{The third step:} \textit{ We use the sufficient scattering condition (PMF.SS.ii) together with the orthogonality of optimal $\bA_*$ to show that any optimal point of (\ref{sparseeqproblem}) has the desired form, i.e., the product of a permutation and a diagonal matrix.}

Given $\bq_i \in \text{ext}(\Ball_\infty)$, we have  $\|\bq_i\|_2=\sqrt{r}$. Since optimal $\bA_*$ is a real orthogonal matrix, we have $\bA_*^T\bq_i\in \text{bd}(\sqrt{r}\Ball_2)$, $\forall \bq_i \in \text{ext}(\Ball_\infty)$.
Combining it with (\ref{eqcond}), we obtain $\bA_*^T\bq_i\in \text{conv}(\bS_g)^*\cap \text{bd}(\sqrt{r}\Ball_2)$, $\forall \bq_i \in \text{ext}(\Ball_\infty)$.
Since the assumption (PMF.SS.ii) restricts  $\text{conv}(\bS_g)^* \cap \text{bd}(\sqrt{r}\Ball_2)$  to $\text{ext}(\Ball_\infty)$, the equivalent condition for the global optimality of $\bA_*$ for (\ref{sparseeqproblem}) can be written as
\begin{subequations}
\label{eq:sparseglobalopt}
\begin{align}
& \bA_*^T\bA_*=\bI, \label{eq:sparseglobalopt1} \\
& \bA_*^T\bq_i \in \text{ext}(\Ball_\infty) \hspace{0.1in} \forall \bq_i\in\text{ext}(\Ball_\infty). \label{eq:sparseglobalopt2}
\end{align}
\end{subequations}
In other words, $\bA_*$ is a global optimum if and only if it is real orthogonal and its transpose maps the vertices of $\Ball_\infty$ in Figure \ref{fig:sparsesuffsc3D}(b) to itself.
Using (\ref{eq:sparseglobalopt}), we conclude that $\bA_*$ has only one nonzero entry in each column(row) as follows:
\begin{itemize}
    \item[i.] (\ref{eq:sparseglobalopt1}) implies $\|{\bA_*}_{:,i}\|_2=1$,
    \item[ii.] (\ref{eq:sparseglobalopt2}) implies $\|{\bA_*}_{:,i}\|_1=1$, since  for any column of $\bA_*$, $\text{sign}({\bA_*}_{:,i})\in \text{ext}(\Ball_\infty)$. Therefore,  $\bA^T_*\text{sign}({\bA_*}_{:,i})=\bq_j$,  for some $\bq_j\in\text{ext}(\Ball_\infty)$, due to (\ref{eq:sparseglobalopt2}). The $i^{\textrm{th}}$ row of $\bq_j$ is $({\bA_*}_{:,i})^T\text{sign}({\bA_*}_{:,i})=\|{\bA_*}_{:,i}\|_1$. Since all components of $\bq_j$  have magnitude $1$,
 we have $\|{\bA_*}_{:,i}\|_1=1$.
\end{itemize}
  The statements (i) and (ii) above are true if and only if ${\bA_*}_{:,i}$ has only one non-zero element with unit magnitude.
  Since the rows of $\bA_*$ are orthonormal, the global optima characterization is given by $\bA_*=\bD\bPi$, where $\bD$ and $\bPi$ are as stated in the theorem.\end{proof}

\subsection{Antisparse Nonnegative PMF}
As defined in item (iii) in Section \ref{sec:PMFproblem}, this special  case refers to the polytope choice
\begin{eqnarray*}
\Pcal=\Ball_{\infty,+}&=& \{\bx \mid \mathbf{0}\le \bx \le \mathbf{1},\mathbf{x}\in\mathbb{R}_+^r\}, \label{ineq:anti-sparsenonnegpoly}\\
&=&\Ball_\infty \cap \mathbb{R}^r_+ 
 = 0.5\Ball_\infty+0.5\mathbf{1},
\end{eqnarray*}
i.e., in essence, a scaled and translated version of $\Ball_\infty$. We apply the same affine transformation to $\mathcal{E}_{\Ball_\infty}$ to obtain the MVIE of $\Ball_{\infty,+}$: $\mathcal{E}_{\Ball_{\infty,+}}=0.5 \mathcal{E}_{\Ball_\infty}+0.5\mathbf{1}$. Therefore, the parameters of the MVIE corresponding to this polytope are given by $\bC_{\Ball_{\infty,+}}=0.5\mathbf{I}$ and $\bg_{\Ball_{\infty,+}}=0.5\mathbf{1}$.

This case is a special case of antisparse BCA covered in  \cite{erdogan2013class}, in which the existing identifiability condition is based on the vertex inclusion assumption. In this section, we provide the characterization of the identifiability condition for nonnegative antisparse PMF based on the weaker sufficient scattering assumption through the following theorem.

\begin{theorem}
Given the general PMF setting outlined in Section \ref{sec:PMFproblem}, if $\bS_g$ is a sufficiently scattered factor for {\it antisparse nonnegative PMF}  according to Definition \ref{suffscat}, then all global optima $\bH_*,\bS_*$ of the  Det-Max optimization problem in (\ref{eq:detmaxoptimization}) with $\Pcal=B_{\infty,+}$ satisfy
\begin{eqnarray*}
\bH_*&=&\bH_g\bPi^T,\\
\bS_*&=&\bPi\bS_g, 
\end{eqnarray*}
where $\bPi\in\mathbb{R}^{r \times r}$ is a permutation matrix.
\end{theorem}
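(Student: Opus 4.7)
The plan is to imitate the three-step structure of the proof of Theorem \ref{thm1}, adapted to $\Pcal=\Ball_{\infty,+}$. The relevant MVIE data are $\bC_\Pcal = 0.5\bI$ and $\bg_\Pcal = 0.5\mathbf{1}$, so $\mathcal{E}_\Pcal = 0.5\Ball_2 + 0.5\mathbf{1}$, and the polar polytope centered at $\bg_\Pcal$ is $\Pcal^{*,\bg_\Pcal} = 2\Ball_1$ with $\text{ext}(\Pcal^{*,\bg_\Pcal}) = \{\pm 2\be_k\}_{k=1}^r$, while $\mathcal{E}_\Pcal^{*,\bg_\Pcal} = 2\Ball_2$. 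The key difference from the antisparse case is that one-sided (nonnegativity) faces break the sign symmetry, so the final answer has no $\bD$.

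For the reduction step, I would use the full column rank of $\bH_g$ together with $\bY=\bH\bS$ to write every feasible $\bS$ as $\bA \bS_g$ with invertible $\bA\in\mathbb{R}^{r\times r}$, reducing the objective to $|\det(\bA)|$. The constraint $\bS_{:,j}\in\Pcal$ translates into the two-sided condition $\mathbf{0}\le\bA\bS_g\le\mathbf{1}\mathbf{1}^T$ entrywise, equivalently $\bA(\text{conv}(\bS_g))\subset\Ball_{\infty,+}$.

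For the orthogonality step, I would exploit (PMF.SS.i), which gives $\mathcal{E}_\Pcal\subset\text{conv}(\bS_g)$. Any row $\bA_{i,:}$ must map $\mathcal{E}_\Pcal$ into $[0,1]$, and the range of $\bA_{i,:}\bu$ over $\bu\in\mathcal{E}_\Pcal$ is exactly the interval $[0.5\bA_{i,:}\mathbf{1}-0.5\|\bA_{i,:}\|_2,\ 0.5\bA_{i,:}\mathbf{1}+0.5\|\bA_{i,:}\|_2]$. The two one-sided face constraints then read $\|\bA_{i,:}\|_2\le\bA_{i,:}\mathbf{1}$ and $\bA_{i,:}\mathbf{1}+\|\bA_{i,:}\|_2\le 2$, which together force $\|\bA_{i,:}\|_2\le 1$. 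Hadamard's inequality yields $|\det(\bA)|\le 1$, achieved by $\bA=\bI$, so any optimal $\bA_*$ is real orthogonal; moreover, tightness of the two row inequalities with $\|\bA_{*,i:}\|_2=1$ pins down $\bA_{*,i,:}\mathbf{1}=1$ for every $i$.

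The delicate part, which I expect to be the main obstacle, is step three: upgrading orthogonality to a bare permutation via (PMF.SS.ii). For the optimal orthogonal $\bA_*$, the extremal points of $\mathcal{E}_\Pcal$ along $\pm\bA_{*,i,:}^T$ are $0.5(\mathbf{1}\pm\bA_{*,i,:}^T)$, attaining values $1$ and $0$. Since $\text{conv}(\bS_g)\supset\mathcal{E}_\Pcal$ and $\bA_{*,i,:}\text{conv}(\bS_g)\subset[0,1]$, the supporting hyperplanes of $\text{conv}(\bS_g)$ with outward normals $\pm\bA_{*,i,:}^T$ are tangent to $\mathcal{E}_\Pcal$ at those two extremal points. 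Translating tangency into the polar domain centered at $\bg_\Pcal=0.5\mathbf{1}$ (the evaluation $\langle\pm\bA_{*,i,:}^T,\ 0.5(\mathbf{1}\pm\bA_{*,i,:}^T)-0.5\mathbf{1}\rangle=0.5$ shows that $\pm 2\bA_{*,i,:}^T$ lie on both $\text{bd}(\text{conv}(\bS_g)^{*,\bg_\Pcal})$ and $\text{bd}(\mathcal{E}_\Pcal^{*,\bg_\Pcal})=\text{bd}(2\Ball_2)$). By (PMF.SS.ii) these intersection points must be in $\text{ext}(2\Ball_1)=\{\pm 2\be_k\}$, forcing $\bA_{*,i,:}^T=\pm\be_k$ for some $k$. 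Combined with $\bA_{*,i,:}\mathbf{1}=1$, the sign is pinned to $+1$, so $\bA_{*,i,:}=\be_k^T$; orthogonality ensures the index $k$ is distinct across rows, yielding $\bA_*=\bPi$. Then $\bH_*=\bH_g\bA_*^{-1}=\bH_g\bPi^T$ and $\bS_*=\bPi\bS_g$, completing the argument. The crux is carefully maintaining the polar-domain correspondence so that the row-sum equality $\bA_{*,i,:}\mathbf{1}=1$ automatically cancels the sign degree of freedom that remains in the unsigned antisparse case.
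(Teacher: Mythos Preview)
Your proposal is correct and follows essentially the same three-step route as the paper's proof: reduce to maximizing $|\det(\bA)|$, use (PMF.SS.i) and Hadamard's inequality to force $\bA_*$ real orthogonal with unit-norm rows, then use (PMF.SS.ii) to place $2\bA_{*,i,:}^T$ at vertices of $2\Ball_1$ and invoke the row-sum condition to kill the sign. The only cosmetic differences are that you pin down $\bA_{*,i,:}\mathbf{1}=1$ (the paper only needs and uses $\ge 1$) and you treat both signs $\pm 2\bA_{*,i,:}^T$ in the polar domain (the paper uses just the positive one); neither alters the argument.
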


\begin{proof}
Following the same treatment in the proof of Theorem \ref{thm1}, we can write the  equivalent form of the optimization in (\ref{eq:detmaxoptimization}) for nonnegative antisparse PMF as 
\begin{maxi!}[l]<b>
{\bA\in \mathbb{R}^{r \times r}}{|\det(\bA)|\label{eq:nonantisparseeqobjective}}{\label{nonantisparseeqproblem}}{}
\addConstraint{0 \le\bA_{i,:}{\bS_g}_{:,j}} {\le 1,\quad}{i=1, \ldots, r \label{eq:eqnonantisparseconstr}}
\addConstraint{}{}{j=1,\ldots, N. \nonumber}
\end{maxi!}
The proof consists of three major steps.

\uline{The first step:} \textit{ We use (\ref{eq:eqnonantisparseconstr}) and (PMF.SS.i) to show that the columns(rows) of any feasible $\bA$ of (\ref{nonantisparseeqproblem}) lie in $\Ball_2$.}

Due to (PMF.SS.i), which is  $\mathcal{E}_{\Ball_{\infty,+}}\subset\text{conv}(\bS_g)$, for all $\bs \in \mathcal{E}_{\Ball_{\infty,+}}$ the constraint (\ref{eq:eqnonantisparseconstr}) holds. Therefore, for any $i \in\{1, \ldots, r\}$, we have
\begin{eqnarray}
 0 \le0.5\bA_{i,:}{\bu}+0.5\bA_{i,:}\mathbf{1}\le 1, \hspace{0.1in} \forall \bu \in \Ball_2, \label{ineq:antisparsenonnegpoly}
 \end{eqnarray}
where we used $\mathcal{E}_{\Ball_{\infty,+}}=\{0.5\bu+0.5\mathbf{1}, \|\bu\|_2\le 1\}$.
If we substitute  $\bu=\frac{(\bA_{i,:})^T}{\|\bA_{i,:}\|_2}$ and $\bu=-\frac{(\bA_{i,:})^T}{\|\bA_{i,:}\|_2}$ in  (\ref{ineq:antisparsenonnegpoly}), we obtain
\begin{eqnarray}
0 \ \ \le&0.5\|\bA_{i,:}\|_2+0.5\bA_{i,:}\mathbf{1}&\le \ \ 1, \text{ and,} \label{ineq:antisparsenonneginc1}\\
0 \ \ \le&-0.5\|\bA_{i,:}\|_2+0.5\bA_{i,:}\mathbf{1}&\le \ \ 1, \label{ineq:antisparsenonneginc2}
 \end{eqnarray}
respectively, for all $i=1, \ldots, r$. The summation of (\ref{ineq:antisparsenonneginc1}) and the sign reversed (\ref{ineq:antisparsenonneginc2}) lead to  $\|\bA_{i,:}\|_2\le 1$ for all rows of $\bA$. In other words, the rows(columns) of any feasible $\bA$ should be in $\Ball_2$.

\uline{The second step:} \textit{ We now  show any optimal solution $\bA_*$ of (\ref{nonantisparseeqproblem}) should be a real orthogonal matrix.}

Using Hadamard's inequality and $\|\bA_{i,:}\|_2\le 1$ from the previous step,  we can place an upper bound on the objective function in (\ref{eq:nonantisparseeqobjective}) as 
$|\det(\bA)|\le  \|\bA_{1,:}\|_2 \|\bA_{2,:}\|_2 \ldots \|\bA_{r,:}\|_2\le 1$. This bound is achieved if and only if the rows form an orthonormal set. Therefore, any  optimal solution $\bA_*$ of (\ref{nonantisparseeqproblem})   is a real orthogonal matrix. 

\uline{The third step:} \textit{ We use (PMF.SS.ii) to show that all global optima of (\ref{nonantisparseeqproblem}) are  permutation matrices.} 

If we substitute a real orthogonal $\bA_*$ in (\ref{ineq:antisparsenonneginc2}), we obtain ${\bA_*}_{i,:}\mathbf{1}\ge 1$. Combining this inequality with (\ref{eq:eqnonantisparseconstr}), we can write
$2{\bA_*}_{i,:}{\bS_g}_{:,j}\le2\le 1+{\bA_*}_{i,:}\mathbf{1}$.
Reorganizing the left and right terms of this expression, we obtain
$2{\bA_*}_{i,:}({\bS_g}_{:,j}-0.5\mathbf{1})\le 1$
for all  the columns of $\bS_g$. Based on this inequality, we conclude that $2{\bA_*}_{i,:}$ lies in $\text{conv}(\bS_g)^{*,0.5\mathbf{1}}$ for all the rows of $\bA_*$. Furthermore, since $\bA_*$ is also real orthogonal,
$2{\bA_*}_{i,:}\in \text{conv}(\bS_g)^{*,0.5\mathbf{1}}\cap \text{bd}(2\Ball_2)$. 
Due to the sufficiently scattered constraint (PMF.SS.ii), which is  $\text{conv}(\bS_g)^{*,0.5\mathbf{1}}\cap \text{bd}(2\Ball_2)=\text{ext}(2\Ball_1)$, ${\bA_*}_{i,:}\in \text{ext}(\Ball_1)$. Furthermore, due to the condition ${\bA_*}_{i,:}\mathbf{1}\ge 1$, $\bA_*$ is a global optimum if and only if its rows are positive standard basis vectors that are orthogonal to each other, i.e., $\bA_*$ is a permutation matrix.
\end{proof}

\subsection{Sparse Nonnegative PMF}
As defined in item (iv) in Section \ref{sec:PMFproblem}, the polytope for  sparse  nonnegative factors is given by
\begin{eqnarray}
\Pcal=\Ball_{1,+}= \{\bx \mid \bx \ge \mathbf{0}, \mathbf{1}^T\bx \le 1 \}=\Ball_1 \cap \mathbb{R}^r_+. \label{ineq:sparsenonnegpoly}
\end{eqnarray}
There are various matrix factorization algorithms combining nonnegativity and sparsity. However, we are not aware of any approaches that make explicit mention of the polytope $\Ball_{1,+}$, and provide the corresponding identifiability conditions. In this section, we  provide identifiability results for this polytope, again based on the sufficient scattering assumption in Definition \ref{suffscat}. For this purpose, we derive the MVIE $\mathcal{E}_{\Ball_{1,+}}$ in Appendix \ref{app:mviesparsenonnegative}. The MVIE derivation in this case is relatively more involved compared to other special cases (in items (i)-(iii) in Section \ref{sec:PMFproblem}). This is due to the fact the MVIE in this case is not spherical, and therefore,  Theorem \ref{johnstheorem} is not applicable.  The inscribed ellipsoid parameters obtained in Appendix \ref{app:mviesparsenonnegative} are
\begin{eqnarray}
\bC_{\Ball_{1,+}}&=&\frac{1}{\sqrt{r}}\left( \frac{1}{\sqrt{r+1}}\mathbf{I}-\frac{\sqrt{r+1}-1}{r^2+r}\mathbf{1}\mathbf{1^T}\right), \label{eq:mvienonsparsecase} \\ \bg_{\Ball_{1,+}}&=&\frac{1}{r+1}\mathbf{1}. \nonumber
\end{eqnarray}

To obtain the polar of the polytope $\Pcal$, we first use the canonical form for the polytope shifted by $-\bg_{\Ball_{1,+}}$ in the form $\Pcal-\bg_{\Ball_{1,+}}=\{\bx\mid \mathbf{1}^T(\bx+\frac{1}{r+1}\mathbf{1})\le 1, -\be_i^T(\bx+\frac{1}{r+1}\mathbf{1}) \le 0, i\in\{1,\ldots,r\}\}$. After some algebraic manipulations, we can convert it into standard form (\ref{appApoly}) in Appendix \ref{app:convex}: $\Pcal-\bg_{\Ball_{1,+}}=\{\bx \mid (r+1)\mathbf{1}^T\bx\le 1,-(r+1)\be_i^T\bx \le 1, i\in\{1,\ldots,r\}\}$. Therefore, based on the procedure in Appendix \ref{app:convex}, the polar of $\Ball_{1,+}$ can be written as
\begin{eqnarray}
\Ball_{1,+}^{*,\frac{1}{r+1}\mathbf{1}}=\text{conv}(-(r+1)\be_1, \ldots,-(r+1)\be_r,(r+1)\mathbf{1}),\label{eq:B1pluspolar}
\end{eqnarray}
which is a polytope with $(r+1)$ vertices (Note that $\Ball_{1,+}$ has $(r+1)$ faces).
\begin{figure}[!ht]
    \centering
    \begin{minipage}[b]{0.22\textwidth}
    \includegraphics[width=\textwidth,trim={0.0cm 1.3cm 0 1.35cm},clip]{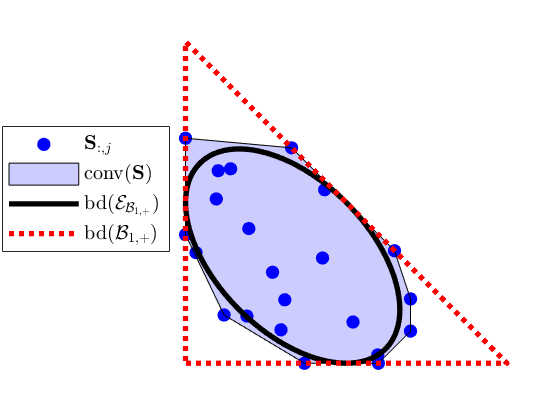}
    \caption*{(a)}
    \label{fig:s3d1}
    \end{minipage}\hfill
    \begin{minipage}[b]{0.225\textwidth}
   \includegraphics[width=\textwidth,trim={2.7cm 11.3cm 4cm 11.5cm},clip]{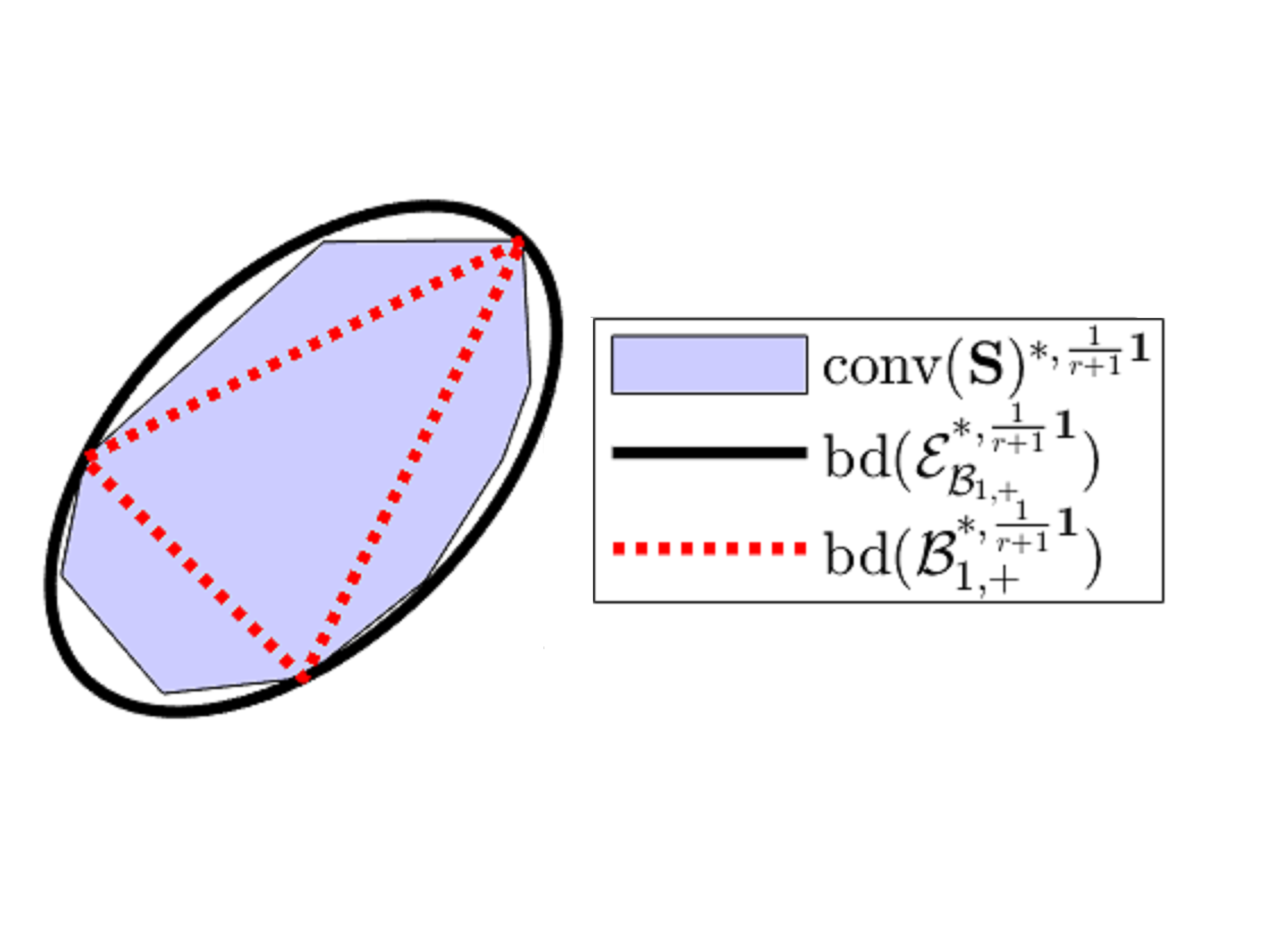}
    \caption*{(b)}
    \label{fig:s3d2}
    \end{minipage}
    \caption{Sufficiently scattered case example for $2$D sparse nonnegative PMF: (a) sample domain, (b) polar domain.}
    \label{fig:sparsenonnegsuffsc2D}
\end{figure}
Figure \ref{fig:sparsenonnegsuffsc2D}(a) illustrates $\Ball_{1,+}$, its MVIE and  an example set of  sufficiently scattered samples for $r=2$. These samples clearly do not contain the vertices of $\Ball_{1,+}$.  The convex hull of the samples contains the nonspherical  MVIE and is tangent to its boundary at the points where the polytope is tangent. 
The picture corresponding to the polars of the sets in Figure   \ref{fig:sparsenonnegsuffsc2D}(a) is provided in Figure \ref{fig:sparsenonnegsuffsc2D}(b). The polar of the polytope $\Ball_{1,+}$ is a triangular region, the  boundary of which is shown with dashed-red lines. We note that as expected from (PMF.SS.ii), $\text{conv}(\bS)^{*,\frac{1}{r+1}\mathbf{1}}$ intersects the boundary of $\mathcal{E}_{\Ball_{1,+}}^{*,\frac{1}{r+1}\mathbf{1}}$ only at the vertices of  $\Ball_{1,+}^{*,\frac{1}{r+1}\mathbf{1}}$. 

Figure \ref{fig:sparsenonnegsuffsc3D} illustrates a sufficiently scattered distribution of columns of $\bS$ for the three-dimensional case.
\begin{figure}[H]
\centering
\begin{subfigure}[b]{0.27\textwidth}
\includegraphics[width=\textwidth,trim={0.5cm 0.5cm 0 0.85cm},clip]{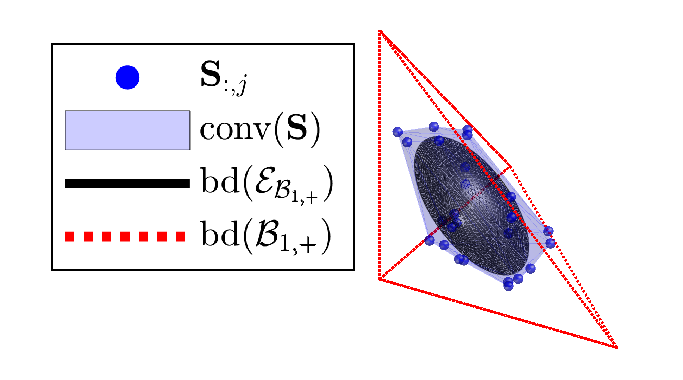}
\end{subfigure}
\caption{$3$D  Sufficiently scattered case example.}
\label{fig:sparsenonnegsuffsc3D}
\end{figure}

We characterize the identifiability condition for the nonnegative sparse PMF problem through the following theorem.
\begin{theorem}
Given the general PMF setting outlined in Section \ref{sec:PMFproblem}, if $\bS_g$ is a sufficiently scattered factor for {\it sparse nonnegative PMF} according to Definition \ref{suffscat}, then all global optima $\bH_*,\bS_*$ of the  \it Det-Max optimization problem in (\ref{eq:detmaxoptimization}) for $\Pcal=\Ball_{1,+}$ satisfy
\begin{eqnarray}
\bH_*&=&\bH_g\bPi^T,\\
\bS_*&=&\bPi\bS_g,
\end{eqnarray}
where $\bPi\in\mathbb{R}^{r \times r}$ is a permutation matrix.
\end{theorem}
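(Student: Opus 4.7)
The plan is to follow the three-step template of the preceding theorems but to replace the spherical-MVIE arguments with tools that work for the non-spherical MVIE $\mathcal{E}_{\Ball_{1,+}}$. As in the earlier proofs, I would first reduce the Det-Max problem to the equivalent scalar program $\max_{\bA} |\det(\bA)|$ subject to $\bA{\bS_g}_{:,j}\in \Ball_{1,+}$ for $j=1,\ldots,N$, using that $\bS=\bA\bS_g$ for some invertible $\bA$ because of the full-column-rank assumption on $\bH_g$ and $\bY=\bH\bS$. In particular, any feasible $\bA$ satisfies $\bA(\text{conv}(\bS_g))\subseteq \Ball_{1,+}$.

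The second step would establish that every optimal $\bA_*$ is a real orthogonal matrix that fixes $\mathbf{1}$. By (PMF.SS.i), $\mathcal{E}_{\Ball_{1,+}}\subseteq \text{conv}(\bS_g)$, so $\bA(\mathcal{E}_{\Ball_{1,+}})$ is an ellipsoid inscribed in $\Ball_{1,+}$ whose volume equals $|\det(\bA)|$ times that of the MVIE. Since the Löwner--John ellipsoid is the unique inscribed ellipsoid of maximum volume, we get $|\det(\bA)|\le 1$ with equality iff $\bA(\mathcal{E}_{\Ball_{1,+}})=\mathcal{E}_{\Ball_{1,+}}$. Matching centers forces $\bA\bg_{\Ball_{1,+}}=\bg_{\Ball_{1,+}}$, i.e.\ $\bA\mathbf{1}=\mathbf{1}$, and matching shape matrices yields $\bA\bC_{\Ball_{1,+}}\bC_{\Ball_{1,+}}^T\bA^T=\bC_{\Ball_{1,+}}\bC_{\Ball_{1,+}}^T$. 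Because the expression in (\ref{eq:mvienonsparsecase}) makes $\bC_{\Ball_{1,+}}$ a linear combination of $\bI$ and $\mathbf{1}\mathbf{1}^T$, the product $\bC_{\Ball_{1,+}}\bC_{\Ball_{1,+}}^T$ has the form $c_1\bI+c_2\mathbf{1}\mathbf{1}^T$ with $c_1>0$; substituting $\bA\mathbf{1}=\mathbf{1}$ collapses the shape equation to $\bA\bA^T=\bI$. Thus optimality implies $\bA_*$ is real orthogonal and $\bA_*\mathbf{1}=\mathbf{1}$.

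For the final step, I would polarize the feasibility constraint in coordinates centered at $\bg_{\Ball_{1,+}}$: $\bA_*(\text{conv}(\bS_g)-\bg_{\Ball_{1,+}})\subseteq \Ball_{1,+}-\bg_{\Ball_{1,+}}$ yields $\bA_*^T(\Ball_{1,+}-\bg_{\Ball_{1,+}})^{*}\subseteq (\text{conv}(\bS_g)-\bg_{\Ball_{1,+}})^{*}$. The $r+1$ vertices of $(\Ball_{1,+}-\bg_{\Ball_{1,+}})^{*}$, listed in (\ref{eq:B1pluspolar}), lie on $\text{bd}(\mathcal{E}_{\Ball_{1,+}}^{*,\bg_{\Ball_{1,+}}})$ because the MVIE is tangent to each face of $\Ball_{1,+}$; and $\bA_*^T$ preserves this polar ellipsoid by Step 2. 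Therefore the images under $\bA_*^T$ of these vertices lie in the intersection $(\text{conv}(\bS_g)-\bg_{\Ball_{1,+}})^{*}\cap \text{bd}(\mathcal{E}_{\Ball_{1,+}}^{*,\bg_{\Ball_{1,+}}})$, which by (PMF.SS.ii) equals the set of $r+1$ original vertices. Invertibility of $\bA_*^T$ then forces a permutation of the vertices. Since $\bA_*$ is orthogonal with $\bA_*\mathbf{1}=\mathbf{1}$, also $\bA_*^T\mathbf{1}=\mathbf{1}$, so $(r+1)\mathbf{1}$ must be a fixed point, leaving $\bA_*^T$ to permute $\{-(r+1)\be_i\}_{i=1}^{r}$ among themselves. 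This yields $\bA_*^T$ (and hence $\bA_*$) equal to a permutation matrix $\bPi$, and $\bH_*=\bH_g\bPi^T$ follows from $\bY=\bH_*\bS_*$.

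The main obstacle I foresee is Step~2: the non-spherical MVIE rules out the direct Hadamard-inequality argument used in the antisparse, sparse, and antisparse-nonnegative cases. Invoking the uniqueness of the Löwner--John ellipsoid, combined with the convenient permutation-invariant algebraic structure of $\bC_{\Ball_{1,+}}$, provides the cleanest route to reducing the shape-preservation condition to ordinary orthogonality; without that structural identity the reduction would be considerably harder.
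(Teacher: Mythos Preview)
Your argument is correct but takes a genuinely different route from the paper's own proof of this theorem. The paper proceeds via explicit computations with the MVIE parameters: it substitutes particular unit vectors into the ellipsoid description, derives the inequalities $\|\bA_{i,:}\|_2\le\bA_{i,:}\mathbf{1}$ and $\mathbf{1}^T\bA\mathbf{1}\le r$, and then chains Hadamard's inequality with the arithmetic--geometric mean inequality to obtain $|\det(\bA)|\le 1$ together with the equality conditions ${\bA_*}_{i,:}\mathbf{1}=\|{\bA_*}_{i,:}\|_2=1$. Your Step~2 replaces this analytic bound by the L\"owner--John uniqueness principle (the MVIE is the unique inscribed ellipsoid of maximal volume), reading off $\bA_*\bg_{\Ball_{1,+}}=\bg_{\Ball_{1,+}}$ and $\bA_*\bC_{\Ball_{1,+}}^2\bA_*^T=\bC_{\Ball_{1,+}}^2$ from $\bA_*(\mathcal{E}_{\Ball_{1,+}})=\mathcal{E}_{\Ball_{1,+}}$, and then exploiting the algebraic form $\bC_{\Ball_{1,+}}^2=c_1\bI+c_2\mathbf{1}\mathbf{1}^T$ to collapse the shape identity to $\bA_*\bA_*^T=\bI$. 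In Step~3 you then polarize the feasibility inclusion directly, whereas the paper verifies membership of $-(r+1)({\bA_*}_{i,:})^T$ and $(r+1)\bA_*^T\mathbf{1}$ in $\text{conv}(\bS_g)^{*,\bg_{\Ball_{1,+}}}\cap\text{bd}(\mathcal{E}_{\Ball_{1,+}}^{*,\bg_{\Ball_{1,+}}})$ by hand. Your approach is cleaner and more conceptual, and it is in fact the specialization to $\Ball_{1,+}$ of the paper's later proof of the general identifiability theorem (Theorem~\ref{thm:generalized}); what the paper's direct proof buys is that it is entirely elementary, avoiding any appeal to the uniqueness of the John ellipsoid. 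One small point to make explicit in your write-up: after establishing $|\det(\bA)|\le 1$ for every feasible $\bA$, you should note that $\bA=\bI$ is feasible, so the optimum actually attains the bound and the equality case applies to $\bA_*$.
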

\begin{proof}
Following the same arguments in the proof of Theorem \ref{thm1}, we start with the equivalent determinant maximization formulation for the nonnegative sparse PMF:
\begin{maxi!}[l]<b>
{\bA\in \mathbb{R}^{r \times r}}{|\det(\bA)|\label{eq:nonsparseeqobjective}}{\label{nonsparseeqproblem}}{}
\addConstraint{0 \le\bA_{i,:}{\bS_g}_{:,j}} {\le 1,\quad}{i=1, \ldots, r \label{eq:eqnonsparseconstr1}}
\addConstraint{}{}{j=1,\ldots, N \nonumber}
\addConstraint{\mathbf{1}^T\bA{\bS_g}_{:,j}}{\le 1}{j=1,\ldots, N. \label{eq:eqnonsparseconstr2}}
\end{maxi!}
The proof consists of four steps.

\uline{The first step:} \textit{ We use the polytopic  constraints  (\ref{eq:eqnonsparseconstr1}) and (\ref{eq:eqnonsparseconstr2}), and the sufficient scattering condition (PMF.SS.i) to show that optimal $\bA_*$ is a real orthogonal matrix with $\mathbf{1}^T\bA_*\mathbf{1}=r$ and $\|{\bA_*}_{i,:}\|_2={\bA_*}_{i,:}\mathbf{1}=1$ for all $i=1, \ldots, r$.} 

Based on the sufficient scattering condition (PMF.SS.i), which is $\text{conv}(\bS_g)\supset \mathcal{E}_{\Ball_{1,+}}$ (as illustrated in Figures \ref{fig:sparsenonnegsuffsc2D}(a) and \ref{fig:sparsenonnegsuffsc3D}), any member of the MVIE should satisfy  the  constraints  (\ref{eq:eqnonsparseconstr1}) and (\ref{eq:eqnonsparseconstr2}). Therefore, we have
\begin{eqnarray}
 0 \le\bA_{i,:}\bC_{\Ball_{1,+}}\bu+\frac{1}{r+1}\bA_{i,:}\mathbf{1}\le1, \hspace{0.1in} \forall \bu \in \Ball_2, \label{ineq:sparsenonnegpoly1}\\
 \mathbf{1}^T\bA\bC_{\Ball_{1,+}}\bu+\frac{1}{r+1}\mathbf{1}^T\bA\mathbf{1}\le1, \hspace{0.1in} \forall \bu \in \Ball_2, \label{ineq:sparsenonnegpoly2}
 \end{eqnarray}
Specifically, if we substitute $\bu=-\frac{(\bA_{i,:}\bC_{\Ball_{1,+}})^T}{\|\bA_{i,:}\bC_{\Ball_{1,+}}\|_2}$ in (\ref{ineq:sparsenonnegpoly1}), we obtain
$\|\bA_{i,:}\bC_{\Ball_{1,+}}\|_2\le\frac{1}{r+1}\bA_{i,:}\mathbf{1}$,  
which further implies
\begin{eqnarray}
\bA_{i,:}\bC_{\Ball_{1,+}}^2(\bA_{i,:})^T\le\left(\frac{1}{r+1}\bA_{i,:}\mathbf{1}\right)^2, \label{ineq:sparsenonneginc12n}
 \end{eqnarray}
for all $i=1, \ldots, r$. Inserting  (\ref{parameter1}) for $\bC_{\Ball_{1,+}}^2$ (derived in Appendix \ref{app:mviesparsenonnegative}) in (\ref{ineq:sparsenonneginc12n}), we obtain
\begin{eqnarray*}
\frac{1}{r(r+1)}\left(\|\bA_{i,:}\|_2^2-\frac{(\bA_{i,:}\mathbf{1})^2}{r+1} \right)\le \frac{(\bA_{i,:}\mathbf{1})^2}{(r+1)^2}.
 \end{eqnarray*}
Simplifying this expression, we have
\begin{eqnarray}
\|\bA_{i,:}\|_2^2\le (\bA_{i,:}\mathbf{1})^2, \label{ineq:constraint1}
\end{eqnarray}
for all $i=1, \ldots, r$. If we substitute   $\bu=\mathbf{0}$ in (\ref{ineq:sparsenonnegpoly1}), we obtain $\bA_{i,:}\mathbf{1}\ge 0$. Therefore, we can rewrite (\ref{ineq:constraint1}) as  $\|\bA_{i,:}\|_2\le \bA_{i,:}\mathbf{1}$.
Furthermore, if we substitute $\bu=\frac{1}{\sqrt{r}}\mathbf{1}$ in (\ref{ineq:sparsenonnegpoly2}), we have
\begin{eqnarray}
\frac{1}{\sqrt{r}}\mathbf{1}^T\bA\bC_{\Ball_{1,+}}\mathbf{1}+\frac{1}{r+1}\mathbf{1}^T\bA\mathbf{1}\le1. \label{ineq:sparsenonneginc22}
\end{eqnarray}
Inserting (\ref{eq:mvienonsparsecase}) for $\bC_{\Ball_{1,+}}$ in (\ref{ineq:sparsenonneginc22}), and after applying some simplifications, we obtain 
\begin{eqnarray}
\mathbf{1}^T\bA\mathbf{1}\le r. \label{ineq:constraint2}
\end{eqnarray}
Replacing the constraints of (\ref{nonsparseeqproblem}) with 
(\ref{ineq:constraint1}) and (\ref{ineq:constraint2}), we obtain the following optimization problem, the solution of which provides an upper bound for (\ref{nonsparseeqproblem}):
\begin{maxi!}[l]<b>
{\bA\in \mathbb{R}^{r \times r}}{|\det(\bA)|\label{eq:nonsparseequbobjective}}{\label{nonsparseequbproblem}}{}
\addConstraint{\bA_{i,:}\mathbf{1}} {\ge \|\bA_{i,:}\|_2,\quad}{i=1, \ldots, r \label{eq:eqnonsparseubconstr1}}
\addConstraint{\mathbf{1}^T\bA\mathbf{1}}{\le r.}{\label{eq:eqnonsparseubconstr2}}
\end{maxi!}
The optimal value of the objective function (\ref{eq:nonsparseequbobjective}) is bounded from above by
\begin{eqnarray}
|\det(\bA)|&\le&  \|\bA^T_{1,:}\|_2 \|\bA^T_{2,:}\|_2 \ldots \|\bA^T_{r,:}\|_2, \label{ineq:hadamardsparsenonneg} \\
&\le&  (\bA_{1,:}\mathbf{1}) (\bA_{2,:}\mathbf{1}) \ldots (\bA_{r,:}\mathbf{1}), \label{ineq:normsparsenonneg} \\
&\le&  (\frac{1}{r}\mathbf{1}^T\bA\mathbf{1})^r, \label{arithm-geo}\\
&\le&  1, \label{upper-bound}
\end{eqnarray}
where the expression in (\ref{ineq:hadamardsparsenonneg}) is Hadamard's inequality, (\ref{arithm-geo}) is due to the arithmetic-geometric mean inequality, and (\ref{upper-bound}) is due to (\ref{eq:eqnonsparseubconstr2}). Therefore, for any optimal solution  $\bA_*$ of (\ref{nonsparseeqproblem}), we have
\begin{eqnarray}
|\det(\bA_*)|\le 1. \label{upper-bound2}
\end{eqnarray}
The equality in (\ref{upper-bound2}) is achieved if and only if $\bA_*$ is a real orthogonal matrix with
\begin{eqnarray}
\mathbf{1}^T\bA_*\mathbf{1}=r,\hspace{0.01in} \text{ and } \|{\bA_*}_{i,:}\|_2={\bA_*}_{i,:}\mathbf{1}=1, \hspace{0.01in}  i=1, \ldots, r \label{ubiffcond}.
\end{eqnarray}
Furthermore, since the identity matrix, $\mathbf{I}$, is clearly a feasible point for (\ref{nonsparseeqproblem}), its determinant corresponds to a lower bound. Therefore,  for any optimal solution $\bA_*$ of  (\ref{nonsparseeqproblem}), we obtain
\begin{eqnarray}
 |\det(\bA_*)|\ge 1. \label{lower-bound}
\end{eqnarray}
Combining  the lower bound in (\ref{lower-bound}) and the upper bound in (\ref{upper-bound2}) and with the equality conditions in (\ref{ineq:hadamardsparsenonneg})--(\ref{upper-bound}), we conclude that $\bA_*$ is an optimal solution for the problem in (\ref{nonsparseeqproblem})  only if $\bA_*$ is a real orthogonal matrix satisfying (\ref{ubiffcond}).

\uline{The second step:} \textit{ We show that the scaled rows of the global optima of (\ref{nonsparseeqproblem}) are in $\text{conv}(\bS_g)^{*,\frac{1}{r+1}\mathbf{1}}$.}

A global optimal point $\bA_*$ satisfies (\ref{eq:eqnonsparseconstr1}), therefore, we can write ${\bA_*}_{i,:}{\bS_g}_{:,j}\ge 0$ for all possible $(i,j)$ pairs. 
 Multiplying this expression by $-(r+1)$, we obtain $-(r+1){\bA_*}_{i,:}{\bS_g}_{:,j}\le 0$. From the previous step, we also have ${\bA_*}_{i,:}\mathbf{1}=1$. Combining these expressions, we can write
\begin{eqnarray}
-(r+1){\bA_*}_{i,:}\left({\bS_g}_{:,j}-\frac{1}{r+1}\mathbf{1}\right)&\le& 1, \nonumber
\end{eqnarray}
which implies $-(r+1)({\bA_*}_{i,:})^T \in \text{conv}(\bS_g)^{*,\frac{1}{r+1}\mathbf{1}}$ for all $i\in \{1, \ldots, r\}$. We will  show in the fourth step that these scaled rows of $\bA_*$ are also the vertices of $\Ball_{1,+}^{*,\frac{1}{r+1}\mathbf{1}}$.

Furthermore, scaling (\ref{eq:eqnonsparseconstr2}) with $(r+1)$, we can write $((r+1)\bA_*^T\mathbf{1})^T{\bS_g}_{:,j}\le r+1$ for all $j \in \{1, \ldots, N\}$. Replacing $r$ on the right with  $\mathbf{1}^T\mathbf{A}_*\mathbf{1}$ (based on (\ref{ubiffcond})), we have
\begin{eqnarray}
((r+1)\bA_*^T\mathbf{1})^T{\bS_g}_{:,j}\le \mathbf{1}^T\mathbf{A}_*\mathbf{1}+1. \label{eq:otherterm}
\end{eqnarray}
Reorganizing (\ref{eq:otherterm}), leads to
\begin{eqnarray*}
((r+1)\bA_*^T\mathbf{1})^T\left({\bS_g}_{:,j}-\frac{1}{r+1}\mathbf{1}\right)\le 1,
\end{eqnarray*} 
from which we conclude
 $(r+1){\bA_*}^T\mathbf{1}\in \text{conv}(\bS_g)^{*,\frac{1}{r+1}\mathbf{1}}$. 

\uline{The third step:} \textit{ We  show that the $-(r+1)$ scaled rows of the global optima $\bA_*$ of (\ref{nonsparseeqproblem}) are on  the boundary of the polar ellipsoid $\mathcal{E}_\Pcal^{*,\frac{1}{r+1}\mathbf{1}}$.}

From Appendix \ref{app:convex}, the polar of the ellipsoid in (\ref{def:ellipsoid}) is given by
\begin{eqnarray*}
\mathcal{E}^{*,\bg_P}_{\mathcal{P}}=\{\bC_P^{-1}\bu \mid 
  \|\bu\|_2\le 1,  \bu\in\mathbb{R}^r\},
\end{eqnarray*}
the boundary of which  can be written as 
\begin{eqnarray}
\text{bd}(\mathcal{E}_\mathcal{P}^{*,\bg_P})=\{\bx \mid \|\bx^T\bC_P\|_2=1\}. \label{eq:polarelboundary}
\end{eqnarray}
To check whether $-(r+1)({\bA_*}_{i,:})^T$ is on the boundary of the polar of the MVIE, we evaluate the norm expression in (\ref{eq:polarelboundary}):
\begin{eqnarray}
\|(r+1){\bA_*}_{i,:}\bC_{\Ball_{1,+}}\|_2^2=&(r+1)^2{\bA_*}_{i,:}\bC_{\Ball_{1,+}}^2{\bA_*}_{i,:}^T,\nonumber\\
=&\frac{r+1}{r}\left(\|{\bA_*}_{i,:}\|_2^2-\frac{({\bA_*}_{i,:}\mathbf{1})^2}{r+1}\right), \label{forbound1}\\
=& 1. \label{forbound2}
\end{eqnarray}
We used  (\ref{parameter1}) to obtain (\ref{forbound1}), and (\ref{ubiffcond}) to simplify (\ref{forbound1}) into (\ref{forbound2}). As a result, $-(r+1)({\bA_*}_{i,:})^T\in \text{bd}(\mathcal{E}_\mathcal{P}^{*,\bg_P})$. Therefore, the rows of the global optimum $\bA_*$ lie in the boundary of the polar domain ellipsoid illustrated in Figure \ref{fig:sparsenonnegsuffsc2D}(b).

Similarly, we can  show $(r+1){\bA_*}^T\mathbf{1}\in \text{bd}(\mathcal{E}_\mathcal{P}^{*,\bg_P})$ through
\begin{eqnarray}
\|(r+1)\mathbf{1}^T{\bA_*}\bC_{\Ball_{1,+}}\|_2^2&=&(r+1)^2\mathbf{1}^T{\bA_*}\bC_{\Ball_{1,+}}^2{\bA_*}^T\mathbf{1},\nonumber\\
&&\hspace*{-1in}=\frac{r+1}{r}\left(\mathbf{1}^T{\bA_*}{\bA_*}^T\mathbf{1}-\frac{(\mathbf{1}^T{\bA_*}\mathbf{1})^2}{r+1}\right), \label{forbound11}\\
&&\hspace*{-1in}=1. \label{forbound21}
\end{eqnarray}
We again used  (\ref{parameter1}) to obtain (\ref{forbound11}), and (\ref{ubiffcond}) to simplify (\ref{forbound11}) into (\ref{forbound21}).

\uline{The fourth step:} \textit{ We combine the results of the previous steps and (PMF.SS.ii) to show that all global optima for (\ref{nonsparseeqproblem}) 
are permutation matrices.}

Combining the results of  the second and third steps of the proof, we deduce that $-(r+1)({\bA_*}_{i,:})^T\in \text{conv}(\bS_g)^{*,\frac{1}{r+1}\mathbf{1}}\cap\text{bd}(\mathcal{E}_\Pcal^{*,\frac{1}{r+1}\mathbf{1}})$ for all $i \in \{1, \ldots ,r\}$ and $(r+1){\bA_*}^T\mathbf{1}\in \text{conv}(\bS_g)^{*,\frac{1}{r+1}\mathbf{1}}\cap\text{bd}(\mathcal{E}_\Pcal^{*,\frac{1}{r+1}\mathbf{1}})$. 
The sufficient scattering condition (PMF.SS.ii) dictates that $\text{conv}(\bS_g)^{*,\frac{1}{r+1}\mathbf{1}} \cap \text{bd}(\mathcal{E}_{\Ball_{1,+}}^{*,\frac{1}{r+1}\mathbf{1}})=\text{ext}(\Ball_{1,+}^{*,\frac{1}{r+1}\mathbf{1}})$.   Due to the canonical description in (\ref{eq:B1pluspolar}), 
\begin{eqnarray}
\text{ext}(\Ball_{1,+}^{*,\frac{1}{r+1}\mathbf{1}}) = \{-(r+1)\be_1, \ldots,-(r+1)\be_r, (r+1)\mathbf{1}\}.\label{eq:extB1pluspolar}
\end{eqnarray}
This condition is visible in Figure \ref{fig:sparsenonnegsuffsc2D}(b)  as a triangle intersecting the boundaries of the ellipsoid and $\text{conv}(\bS_g)^{*,\frac{1}{r+1}\mathbf{1}}$ at its $(r+1)$ vertices only.
Therefore, we conclude that the set $\{(r+1)\bA_*^T\mathbf{1},-(r+1)({\bA_*}_{i,:})^T,i\in\{1,\ldots,r\}\}$ are all vertices of $\Ball_{1,+}^{*,\frac{1}{r+1}\mathbf{1}}$ given by (\ref{eq:extB1pluspolar}). This equivalence together with (\ref{ubiffcond}) lead to the conclusion that the rows of $\bA_*$ are the standard basis vectors; therefore, $\bA_*$ is a permutation matrix.\end{proof}

\section{Generalized PMF}
\label{sec:generalizedpmf}
In the previous section, we concentrated on four particular polytope examples that correspond to the cross combinations of sparse/antisparse and nonnegative/signed attributes. For  these examples, the resulting features are globally applied to all latent vector components. We showed that PMF settings based on these particular polytopes are  always  identifiable using Det-Max optimization problem in (\ref{eq:detmaxoptimization})  if the sufficient scattering condition in Definition \ref{suffscat} holds.

It is interesting to explore whether we can develop alternative polytopes that lead to Det-Max identifiable PMF generative settings as defined in Definition \ref{def:detmaxident}. In this section, we provide a positive answer and show that the set of  polytopes that qualify for the PMF framework is infinitely rich \cite{tatli:2021icassp}. In particular, we show that, as long as a polytope conforms with a specific symmetry restriction, it would always  lead to identifiable generative models under the sufficiently scattering assumption. We now formalize the definition of ``identifiable polytopes".
\begin{definition}{\it Identifiable Polytopes with respect to the Det-Max Criterion}: We refer to a polytope $\Pcal$ as ``Det-Max identifiable" (or simply ``identifiable")  if all generative models in (\ref{eq:PMFgen}) and (\ref{eq:PMFgen2}) based on sufficiently scattered samples from $\Pcal$, according to Definition \ref{suffscat}, are ``Det-Max identifiable"
\end{definition}
We show that the identifiable polytopes should satisfy a particular symmetry condition, which is laid out in the following definition:
\begin{definition}\label{def:permsign}{\it Permutation-and/or-Sign-Only Invariant Set.}  A set $\mathcal{F}$ is called  permutation-and/or sign-only invariant if and only if any linear transformation that satisfies
\begin{eqnarray*}
\mathbf{A}(\mathcal{F})=\mathcal{F}
\end{eqnarray*}
has the form $\mathbf{A}=\mathbf{\Pi}\mathbf{D}$, where $\mathbf{\Pi}\in\mathbb{R}^{r \times r}$ is a permutation matrix, and $\mathbf{D}\in\mathbb{R}^{r \times r}$ is a full rank diagonal matrix with its diagonal entries in $\{-1,1\}$.
\end{definition}
We note that Definition \ref{def:permsign} defines a symmetry restriction: a set satisfying the condition in this definition  can not be mapped to itself under any linear transformation other than the combination of permutation-sign scaling transformations. The following theorem, the proof of which is provided in  \cite{tatli:2021icassp} and Appendix \ref{app:proofoftheorem}, characterizes all ``Det-Max identifiable" polytopes based on the symmetry restriction in Definition \ref{def:permsign}  \cite{tatli:2021icassp}:
\begin{theorem}[\it Det-Max Identifiable Polytope] \label{thm:generalized}
A polytope $\Pcal$ is ``Det-Max identifiable" if and only if its set of vertices, $ext(\Pcal)$, is a permutation-and/or-sign-only invariant set.
\end{theorem}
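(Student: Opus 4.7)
The plan is to reduce both directions of the equivalence to the single-matrix problem already underlying the four special-case proofs. Since $\bH_g$ has full column rank, every feasible $\bS$ in (\ref{eq:detmaxoptimization}) can be written as $\bS = \bA \bS_g$ for some invertible $\bA \in \mathbb{R}^{r \times r}$, so the objective becomes $|\det(\bA)|^{2}\det(\bS_g \bS_g^T)$ and the constraint becomes $\bA(\text{conv}(\bS_g)) \subseteq \Pcal$. The theorem is therefore a statement about which polytopes force every maximizer $\bA_*$ of $|\det(\bA)|$ on this feasible set to be a signed permutation.

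For the necessity direction I would argue by contrapositive. If $\text{ext}(\Pcal)$ is not permutation-and/or-sign-only invariant, Definition~\ref{def:permsign} supplies a linear $\bA_0$ with $\bA_0(\text{ext}(\Pcal)) = \text{ext}(\Pcal)$, hence $\bA_0(\Pcal) = \Pcal$ by taking convex hulls, yet $\bA_0 \neq \bPi \bD$. Volume preservation on a bounded full-dimensional set gives $|\det(\bA_0)| = 1$. For any sufficiently scattered $\bS_g$ and any full-column-rank $\bH_g$, the alternate factorization $(\bH_g \bA_0^{-1},\,\bA_0 \bS_g)$ is Det-Max feasible and attains the same objective value $\det(\bA_0 \bS_g \bS_g^T \bA_0^T) = \det(\bS_g \bS_g^T)$ as the ground-truth factorization, so it is also a global optimum. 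Because (PMF.SS.i) makes $\text{conv}(\bS_g)$ contain a full-dimensional MVIE, $\bS_g$ has full row rank $r$, so $\bA_0 \bS_g = \bD \bPi \bS_g$ would collapse to $\bA_0 = \bD \bPi$, contradicting the choice of $\bA_0$. Hence $\Pcal$ fails to be Det-Max identifiable.

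For the sufficiency direction I would follow the three-step template of Theorem~\ref{thm1}. Step one is the volume bound: (PMF.SS.i) gives $\bA(\mathcal{E}_\Pcal) \subseteq \bA(\text{conv}(\bS_g)) \subseteq \Pcal$, and uniqueness of the MVIE forces $|\det(\bA)|\le 1$ with equality iff $\bA(\mathcal{E}_\Pcal) = \mathcal{E}_\Pcal$, which at any global optimum yields $\bA \bg_\Pcal = \bg_\Pcal$ together with $\bC_\Pcal^{-1}\bA \bC_\Pcal$ orthogonal. Step two is the polar step: taking polars of $\bA(\text{conv}(\bS_g)) \subseteq \Pcal$ about $\bg_\Pcal$ yields $\bA^T \bv \in \text{conv}(\bS_g)^{*,\bg_\Pcal}$ for every $\bv \in \text{ext}(\Pcal^{*,\bg_\Pcal})$, while the ellipsoid identity from step one upgrades to $\bA^T(\text{bd}(\mathcal{E}_\Pcal^{*,\bg_\Pcal})) = \text{bd}(\mathcal{E}_\Pcal^{*,\bg_\Pcal})$. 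Combining these with (PMF.SS.ii) shows $\bA^T \bv \in \text{ext}(\Pcal^{*,\bg_\Pcal})$, so $\bA^T$ permutes the polar vertices; dualizing back gives $\bA(\text{ext}(\Pcal)) = \text{ext}(\Pcal)$. Step three closes: the assumed permutation-and/or-sign-only invariance of $\text{ext}(\Pcal)$ forces $\bA = \bPi \bD$, and Definition~\ref{def:detmaxident} is satisfied with $\bS_* = \bD \bPi \bS_g$ and $\bH_* = \bH_g \bPi^T \bD^{-1}$.

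The main obstacle is the polar step of the sufficiency direction. Step one alone pins $\bA$ down only up to preservation of $\mathcal{E}_\Pcal$, which is a continuum of transformations (the $\bC_\Pcal$-conjugates of the orthogonal group), so extracting the discrete conclusion $\bA(\text{ext}(\Pcal)) = \text{ext}(\Pcal)$ relies entirely on the polar tightness clause (PMF.SS.ii). Threading this argument through polar duality about an off-origin center $\bg_\Pcal$---the subtlety already visible in the $\Ball_{1,+}$ proof---is where the genuine technical work lies; once the vertex-permuting property is in hand, the closing reduction through Definition~\ref{def:permsign} is immediate.
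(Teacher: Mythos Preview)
Your proposal is correct and follows essentially the same route as the paper's proof in Appendix~\ref{app:proofoftheorem}: the sufficiency direction uses the MVIE volume/uniqueness argument to pin down $|\det(\bA_*)|=1$ and $\bA_*(\mathcal{E}_\Pcal)=\mathcal{E}_\Pcal$, then passes to the polar about $\bg_\Pcal$ via Lemma~\ref{lem:polaroftransform} and combines (PMF.SS.ii) with the ellipsoid-boundary preservation to force $\bA_*^T$ to permute $\text{ext}(\Pcal^{*,\bg_\Pcal})$, from which $\bA_*(\text{ext}(\Pcal))=\text{ext}(\Pcal)$ and the conclusion follow. The only (minor) difference is in the necessity direction: the paper exhibits the single sufficiently scattered factor $\bS_g=\bV_\Pcal$ to produce the counterexample, whereas you argue that \emph{every} sufficiently scattered $\bS_g$ already fails identifiability under the offending $\bA_0$; your version is a bit stronger but both are valid contrapositives.
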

The symmetry condition imposed by this theorem is satisfied by infinitely many polytopes. The abundance of polytope choices implies a degree of freedom  for defining a diverse set of feature descriptions for latent vectors. In particular, this diversity can be exploited to render latent vector features with heterogeneous structures without resorting to any stochastic assumption such as independence. This property contrasts with the existing deterministic matrix factorization frameworks, such as NMF, SCA, and BCA, which impose a common attribute, such as nonnegativity, antisparsity or sparsity, over the whole vector.  Using the PMF framework, it is possible to choose only a subset of the components to be nonnegative. Furthermore, we can   impose sparsity constraints on  potentially overlapping multiple subsets of components. In the numerical examples section (Section \ref{sec:polylocalfeatures}), we provide an example of such a heterogeneous latent vector design.

\section{Algorithm}
\label{sec:algorithm}
The main emphasis of the current article is laying out the PMF framework and the corresponding identifiability analysis. To illustrate its use, we adopt the iterative algorithm in  \cite{fu2016robust} which is originally proposed for the SSMF framework.

We start by introducing the Determinant Minimization Det-Min problem, equivalent to  the Det-Max optimization problem in (\ref{eq:detmaxoptimization}) under the equality constraint in (\ref{eq:detmaxconstr1}) \cite{fu2019nonnegative}:
\begin{mini!}[l]<b>
{\bH\in \mathbb{R}^{M \times r},\bS\in \mathbb{R}^{r\times N}}
{\det(\bH^T\bH)\label{eq:detminobjective}}{\label{eq:detminoptimization}}{}
\addConstraint{\bY=\bH \bS}{\label{eq:detminconstr1}}{}
\addConstraint{\bS_{:,j} \in \mathcal{P}}{,\quad}{j= 1, \ldots, N.\label{eq:detminconstr2}}{ }
\end{mini!}
Similar to  \cite{fu2016robust}, we employ  the Lagrangian optimization,
\begin{mini!}[l]<b>
{\bH,\bS}
{\|\bY-\bH\bS\|_F^2+\lambda \log\det(\bH^T\bH+\tau\bI)\label{eq:detminlobjective}}{\label{eq:detminloptimization}}{}
\addConstraint{\bS_{:,j} \in \mathcal{P}}{,\quad}{j= 1, \ldots, N.\label{eq:detminlconstr2}}{ }
\end{mini!}
corresponding to (\ref{eq:detminoptimization}), where $\tau>0$ is a hyperparameter to ensure that the objective function is bounded from below. 
\algnewcommand\algorithmicinput{\textbf{Input:}}
\algnewcommand\algorithmicoutput{\textbf{Output:}}
\algnewcommand\Input{\item[\algorithmicinput]}%
\algnewcommand\Output{\item[\algorithmicoutput]}%
\begin{algorithm}[H]
\caption{Det-Min algorithm for PMF}
    \begin{algorithmic}[1]
    \Input $\bY$; $r$; Initial $\bH,\bS$;  $\tau$.\\
     $t=0$;\\
     $\bX^{(t)}=\bS$,$\bF^{(t)}=\bI$,$\bH^{(t)}=\bH$,$\bS^{(t)}=\bS$, $q^{(t)}=1$;
    \Repeat\\
       select $L^{(t)}$;\\ 
       $\bS_{:,l}^{(t+1)}\gets P_\Pcal\left(\bX_{:,l}^{(t)}-(\bH^{(t)})^T(\bY_{:,l}-\bH^{(t)}\bX_{:,l}^{(t)})\right)$ for $l=1, \ldots, N$; \\
       $q^{(t+1)}\gets \frac{1+\sqrt{1+(q^{(t)})^2}}{2}$;\\
       $\bX_{:,l}^{(t+1)}\gets \bS_{:,l}^{(t+1)}+\frac{q^{(t)}-1}{q^{(t+1)}}(\bS^{(t+1)}_{:,l}-\bS^{(t)}_{:,l})$  for $l=1, \ldots, N$;\\
       $\bH^{(t+1)}\gets\bY(\bS^{(t+1)})^T(\bS^{(t+1)}(\bS^{(t+1)})^T+\lambda\bF^{(t)})$;\\
       $t \gets t+1$;\\
       $\bF^{(t)}\gets((\bH^{(t)})^T\bH^{(t)}+\tau \bI)^{-1}$.
    \Until some stopping criterion is reached.
    \Output $\bH^{(t)}$;$\bS^{(t)}$.
    \end{algorithmic}
\end{algorithm}
The corresponding steps are provided in Algorithm 1, which is the algorithm in \cite{fu2016robust}, except that the projection onto  the unit simplex is replaced with the projection to the polytope, $P_\Pcal(\cdot)$. The following are examples of this projection  operator:
\begin{itemize}
\item  {\it Antisparse Case:} ${\bX}={P}_{\Ball_\infty}(\bar{\bX})$ defines an elementwise projection operator to $\Ball_\infty$, which can be written as
\begin{eqnarray*}
    {X}_{ij}=\left\{\begin{array}{cc} {\bar{X}}_{ij} & \text{if } |\bar{X}_{ij}|<1, \\
    \text{sign}(\bar{X}_{ij}) & \text{otherwise.} \end{array} \right.
    \end{eqnarray*}
\item {\it Sparse Case}: For ${P}_{\Ball_1}(\cdot)$, the projection onto the $\ell_1$-norm-ball has no closed-form solution; however, efficient iterative algorithms, such as   \cite{duchi2008efficient}, exist.
\item {\it Antisparse Nonnegative Case:} The projection operator is a simple modification of ${P}_{\Ball_\infty}(\cdot)$, where elementwise projections are performed over  $[0,1]$  instead of $[-1,1]$.
\item {\it Sparse Nonnegative Case:} We can simplify the projection operator $\Ball_1$ to obtain the projection onto $\Ball_{1,+}$ (see \cite{duchi2008efficient}).
\end{itemize}

\section{Numerical Experiments}
\label{sec:numexperiments}
\subsection{Polytope with Local Features}
\label{sec:polylocalfeatures}
To illustrate the feature shaping flexibility provided by the PMF framework, we consider the following example polytope:
\begin{eqnarray*}
\def\arraystretch{1.6}
\Pex=\left\{\mathbf{x}\in \mathbb{R}^3\ \middle\vert \begin{array}{l}   x_1,x_2\in[-1,1],x_3\in[0,1],\\ \left\|\left[\begin{array}{c} x_1 \\ x_2 \end{array}\right]\right\|_1\le 1,\left\|\left[\begin{array}{c} x_2 \\ x_3 \end{array}\right]\right\|_1\le 1 \end{array}\right\},
\end{eqnarray*}
which corresponds to  the following local attributes:
\begin{itemize}
\item $x_3$ is nonnegative and $x_1,x_2$ are signed; and
\item $\left[\begin{array}{cc} x_1 & x_2 \end{array}\right]^T$ and $\left[\begin{array}{cc} x_2 & x_3\end{array}\right]^T$ are sparse subvectors.
\end{itemize}
\begin{figure}[ht]
         \centering
         \includegraphics[trim={1.5cm 0 2.2cm 3cm},clip,width=0.27\textwidth]{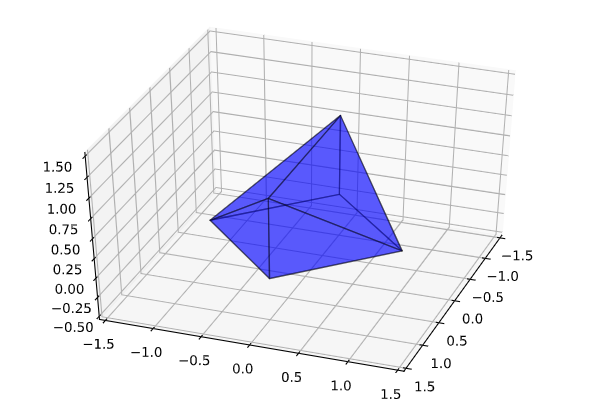}
\caption{Polytope $\Pcal_{\text{ex}}$.}
         \label{fig:Pex}
\end{figure}
The polytope $\Pex$, shown in Figure \ref{fig:Pex}, has $6$ vertices placed in the columns  of the following matrix:
\begin{eqnarray*}
\mathbf{V}_{\Pcal_{\text{ex}}}=\left[\begin{array}{cccccc} 1 & -1 & 0 & 0 & 1 & -1\\
                                       0 &  0 & 1 & -1& 0 &  0\\
                                       0 & 0 & 0  &  0 & 1 & 1 \end{array}\right].
\end{eqnarray*}
Checking all of the possible permutations of the columns of this matrix reveals that the vertex set satisfies the desired symmetry  restriction; therefore, $\Pcal_{\text{ex}}$ is identifiable. Due to its factorial complexity, this check can become computationally demanding for polytopes with a large number of vertices. To illustrate both the identifiability of $\Pcal_{\text{ex}}$ and the convergence behavior of the algorithm for sufficiently scattered samples, we conducted the following experiment. With each run of the experiment, we generate a  set of sufficiently scattered samples from $\Pcal_{\text{ex}}$ using the procedure described below.
\begin{itemize}
\item[S1.]\uline{Generate the polar domain set   $\text{conv}(\bS)^{*,\bg_{\mathcal{P}_{\text{ex}}}}$ in {\bf V-Form}} (\ref{eq:ptope2}): For this purpose we generate $L$ samples from  $\text{conv}(\bS)^{*,\bg_{\mathcal{P}_{\text{ex}}}}$, the columns of $\bK\in\mathbb{R}^{r \times L}$, and define $\text{conv}(\bS)^{*,\bg_{\mathcal{P}_{\text{ex}}}}=\text{conv}(\bK)$: 

\item[i.] According to the condition (PMF.SS.ii) in Definition \ref{suffscat}, the elements of $\text{ext}(\mathcal{P}^{*, \bg_{\mathcal{P}_{\text{ex}}}})$ should be the vertices of $\text{conv}(\bS)^{*,\bg_{\mathcal{P}_{\text{ex}}}}$. Therefore, we first include elements of $\text{ext}(\mathcal{P}_{\text{ex}}^{*, \bg_{\mathcal{P}_{\text{ex}}}})$ in $\bK$, by setting: \begin{eqnarray*}
\bK_{:,1:7}=\left[\begin{array}{ccccccc} 1& 1 & -1 & -1 & 0 & 0 & 0\\
                                1&       -1 &  1 & -1 & 1.6& -1.6 &  0\\
                                     0 &  0 & 0 & 0  &  1.6 & 1.6 & -8/3 \end{array}\right].
\end{eqnarray*}
Note that $\text{conv}(\bK_{,1:7})=\mathcal{\Pcal_{\text{ex}}}^{*, \bg_{\mathcal{P}_{\text{ex}}}}$. If we set $L=7$, and therefore, skip the next step (S1.ii), our procedure would generate the vertices of $\Pcal_{\text{ex}}$, which is a sufficiently scattered set.
\item[ii.] According to (PMF.SS.ii), the remaining vertices of $\text{conv}(\bS)^{*,\bg_{\mathcal{P}_{\text{ex}}}}$ should be in the interior of $\mathcal{E}_{\Pcal_{\text{ex}}}^*$. Therefore, we generate $L-7$ random points in the interior of $\mathcal{E}_{\Pcal_{\text{ex}}}^*$.  For this purpose, we generate $L-7$ i.i.d. $r$-dimensional random samples in $0.9\Ball_2$. Then we multiply these vectors with $\bC_P^{-1}$ to obtain the remaining columns of $\bK$.
\item[iii.] We find {\bf V-Form} for $\text{conv}(\bS)^{*,\bg_{\mathcal{P}_{\text{ex}}}}$ by applying a numerical convex hull algorithm (such as ConvexHull function of Python's Scipy library \cite{2020SciPy-NMethetal}) to the columns of $\bK$. The output of this step is the matrix $\bV\in\mathbb{R}^{r\times L'}$ containing the vertices of  $\mathcal{P}^{*, \bg_{\mathcal{P}_{\text{ex}}}}$.
\item[S2.]\uline{Convert the representation of $\text{conv}(\bS)^{*,\bg_{\mathcal{P}_{\text{ex}}}}$ from {\bf V-Form} to {\bf H-Form}:} 
We convert the representation of $\text{conv}(\bS)^{*,\bg_{\mathcal{P}_{\text{ex}}}}$ from 
{\bf V-Form} in (\ref{eq:ptope2}) to {\bf H-Form} in (\ref{eq:ptope1}). For this purpose, we use the PYPOMAN (a PYthon module for POlyhedral MANipulations) software package (\texttt{duality.compute\_polytope\_halfspaces} function.)  \cite{Pypoman:2020}. The output of this stage are the hyperplane parameters  $(\ba_i,b_i), i=1, \ldots, f$,   for the {\bf H-Form}.
\item[S3.] \uline{Find the vertices of $\text{conv}(\bS_g)$:}  We first  perform the normalization on the hyperplane parameters to obtain $(\ba_i/b_i,1), i=1, \ldots, f$. Note that according to the polar conversion described in  Appendix \ref{app:convex}, obtaining {\bf V-Form} for $\Pcal_{\text{ex}}$ in (\ref{appApolar}) from the {\bf H-Form} for  $\text{conv}(\bS)^{*,\bg_{\mathcal{P}_{\text{ex}}}}$ in (\ref{appApoly}), the vectors $\{\ba_i/b_i+\bg_{\mathcal{P}_{\text{ex}}}, i=1, \ldots, f\}$ are the vertices of $\text{conv}(\bS_g)$, where $\bg_{\mathcal{P}_{\text{ex}}}=[0,0,0.375]$. Therefore, we can set
\begin{eqnarray*}
{\bS_g}_{:,1:f}=\left[\begin{array}{cccc} \frac{\ba_1}{b_1}+\bg_{\mathcal{P}_{\text{ex}}} & \frac{\ba_2}{b_2}+\bg_{\mathcal{P}_{\text{ex}}} & \ldots & \frac{\ba_f}{b_f}+\bg_{\mathcal{P}_{\text{ex}}}\end{array}\right].
\end{eqnarray*}
Note that $f$, the number of vertices of $\text{conv
}(\bS_g)$ (or faces of its polar), is a random quantity.
\end{itemize}
In the following experiments, we chose  $L=30$.
The $\bH_g$ matrix is generated as a $4 \times 3$ i.i.d. Gaussian matrix (with zero mean and unity variance). We added zero mean i.i.d. Gaussian noise to the input matrix $\bY=\bH_g\bS_g$.
The projection operation $P_{\Pcal_{\text{ex}}}$ onto $\Pex$ is implemented through $5$ alternating iterations of the following.
\begin{itemize}
\item the projection onto the rectangle corresponding to the range constraints, implemented by elementwise clipping operations;
\item the projection onto the $\ell_1$-norm-ball for $\left[\begin{array}{cc} x_1 & x_2 \end{array}\right]^T$;
\item the projection onto the $\ell_1$-norm-ball for $\left[\begin{array}{cc} x_2 & x_3 \end{array}\right]^T$.
\end{itemize}
For the algorithm hyperparameters, we selected $\tau=10^{-8}$, $L^{(t)}=5\|(\bH^{(t)})^T\bH^{(t)}\|_2$ and  $\lambda=0.01$. 
\begin{figure}[ht]
         \centering
         \includegraphics[trim={0cm 0cm 0.0 0cm},clip,width=0.35\textwidth]{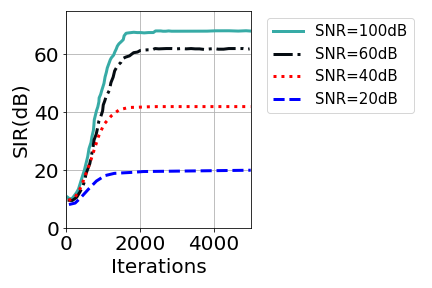}
\caption{Average SIR convergence curves for $\Pcal_{\text{ex}}$.}
         \label{fig:Pex2}
\end{figure}
The convergence of the algorithm in terms of the signal-to-interference ratio (SIR), averaged over $100$ realizations, as a function of the iterations is shown in Figure \ref{fig:Pex2} for different signal-to-noise-ratio (SNR) levels. These experiments confirm the identifiability under the sufficiently scattered condition.

\begin{figure}[ht]
`    \centering
    \begin{minipage}[b]{0.15\textwidth}
    \includegraphics[width=0.99\textwidth,trim={5.0cm 0cm 2.3cm 1.1cm},clip
    ]{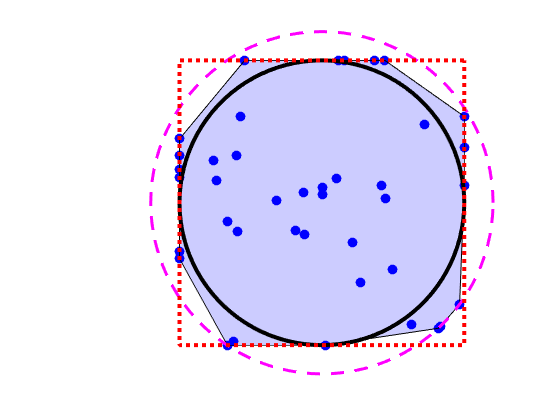}
    \caption*{(a)}
    \label{fig:Pex321}
    \end{minipage}\hfill
    \begin{minipage}[b]{0.25\textwidth}
    \includegraphics[width=0.99\textwidth
    ,trim={0cm 0.0cm 0cm 0.0cm},clip
    ]{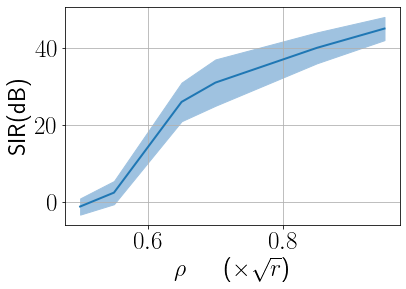}
    \caption*{(b)}
    \label{fig:Pex322}
    \end{minipage}
\caption{(a)  Sample generation example for Section \ref{sec:specialPMFex}: $\Pcal=\Ball_\infty$ ($r=2$), the dashed circle represents the boundary of the inflated MVIE  for $\rho=0.85\sqrt{r}$. (b) SIR (mean-solid line with 
std. envelope) as a function of $\rho$ for $\Ball_\infty$, $r=10$ and $N=500$.}
         \label{fig:Pex32}
\end{figure}
\subsection{Special Polytopes in Section \ref{sec:specialPMF}}
\label{sec:specialPMFex}
In this section, we provide experiments for the special polytopes in Section \ref{sec:specialPMF} and illustrate an alternative sample generation method. For high dimensions, the sufficiently scattered sample generation procedure that we proposed in Section \ref{sec:polylocalfeatures} may  not be feasible due to the computational complexity of the polytope representation conversion step in Step S2. Instead, we use another method which is based on   principles similar to those used in  \cite{lin2015identifiability,fu2016robust}. In this technique, we generate random samples inside the inflated version of the MVIE, and project them onto the polytope. Therefore, the procedure consists of two steps:
\begin{itemize}
\item[S1] \uline{Generate $N$ samples from the inflated version of the MVIE of the polytope}: we first generate $N$ random i.i.d. Gaussian vector samples  $\bw_i\sim \mathcal{N}(\mathbf{0},\frac{\rho^2}{r}\mathbf{I}), i=1,\ldots N$, in $\mathbb{R}^r$, the norms of which  concentrate around the mean $E(\|\bw_i\|_2)=\rho$.  We call $\rho$  ``inflation constant" and choose $\rho>1$. Then, we saturate vectors with norm greater than $\rho$ by defining $\bu_i=\rho\bw_i/\|\bw_i\|_2$ if $\|\bw_i\|_2>\rho$ and $\bu_i=\bw_i$ otherwise. The resulting $\bu_i$ vectors are random samples in the hypersphere $\rho\Ball_2$, i.e., expanded unit hypersphere. Finally, we map the samples $\bu_i$ into the inflated version of  $\mathcal{E}_\Pcal$  using $\bz_i=\bC_\Pcal\bu_i+\bg_\Pcal$. Therefore, the resulting samples lie in $\rho(\mathcal{E}_\Pcal-\bg_\Pcal)+\bg_\Pcal$, the $\rho$-inflated version of the MVIE.
\item[S2.] \uline{Project the samples $\bz_i$ onto the polytope}: i.e., ${\bS_g}_{:,i}=P_\Pcal(\bz_i), i=1, \ldots, N$. 
\end{itemize}
Figure \ref{fig:Pex32}(a) illustrates the proposed sample generation for $\Pcal=\Ball_\infty$ and $r=2$. The inflation constant is selected as $\rho=0.85\sqrt{r}$.
 Note that the choice $\rho=1$ corresponds to the MVIE, and $\rho=\sqrt{r}$ corresponds to the minimum volume enclosing sphere of $\Ball_\infty$. Therefore, when $\rho<\sqrt{r}$, the vertices of $\Ball_\infty$ are not covered by the inflated MVIE.
 
 For the experiments with polytopes in Section \ref{sec:specialPMF}, we took $r=10$ and $M=20$. At each realization, we independently  generated $\bH_g$ as a $20\times 10$ i.i.d. Gaussian matrix with zero mean and unity variance. We used different empirical $\lambda$ parameter choices  for different polytope and sample size selections to improve the SIR performance. We conducted this experiment for $300$ realizations.
  Figure \ref{fig:Pex32}(b) shows the SIR obtained as a function of $\rho$ for $\Pcal=\Ball_\infty$ and $N=500$. In Figure \ref{fig:Pex3}, for the choice $\rho=0.85\sqrt{r}$, we show the SIR as a function of the sample size $N$ for all four polytopes in Section \ref{sec:specialPMF}. 
   Both Figure \ref{fig:Pex32} and \ref{fig:Pex3}  confirm that the sufficiently scattered set generation probability increases with the increasing values of  $\rho$ and $N$ as expected.
\begin{figure}[ht]
         \centering
         \includegraphics[trim={0cm 0cm 0.0 0.1cm},clip,width=0.42\textwidth]{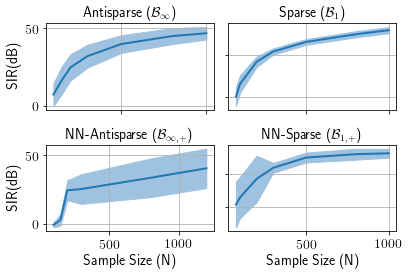}
\caption{SIR (mean-solid line with 
std. envelope) as a function of sample size $N$ for the polytopes in Section \ref{sec:specialPMF}.}
         \label{fig:Pex3}
\end{figure}
\subsection{Sparse Dictionary Learning for Natural Image Patches}
As the second example, we applied sparse PMF to $12 \times 12$ image patches obtained from Olshausen's prewhitened natural images (available at http://www.rctn.org/bruno/sparsenet/). These patches are vectorized (into $144 \times 1$ vectors) and placed in the columns of the $\bY$ matrix. The columns of the $\bH \in \mathbb{R}^{144 \times 144}$ matrix obtained from the sparse PMF algorithm (with $\lambda=1$, $\tau=10^{-6}$ and  $L^{(t)}=4\|(\bH^{(t)})^T\bH^{(t)}\|_2$)  are reshaped as $12 \times 12$ images (rescaled to the $0-1$ range) are shown in Figure \ref{fig:imagePatch}. It is interesting to note that, although we used a different normative approach (based on determinant maximization) than \cite{olshausen1997sparse}, we obtained similar Gabor-like edge features for the natural image patches.
\begin{figure}[ht]
         \centering
         \includegraphics[trim={1 0.8 0.0cm 0.4cm},clip,width=0.4\textwidth]{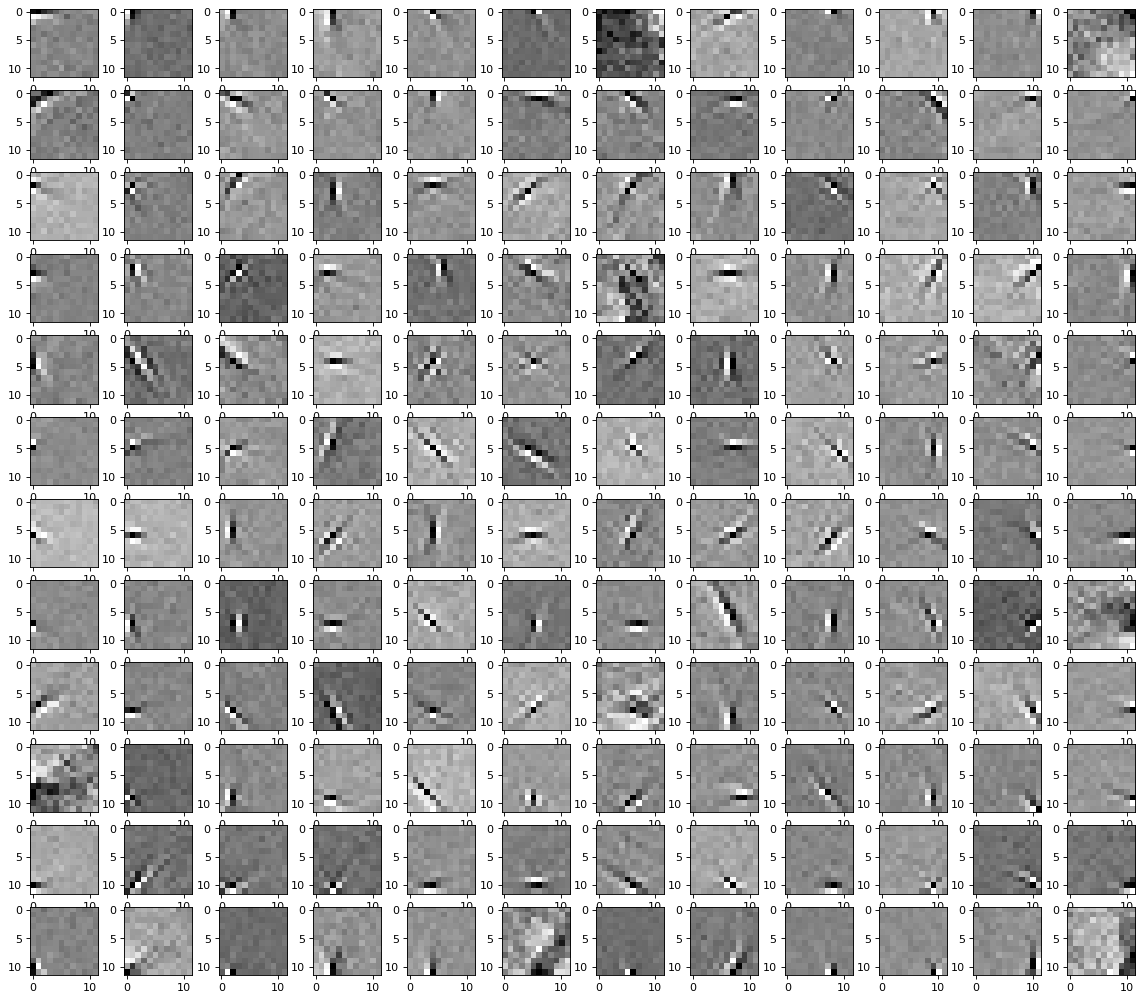}
\caption{Dictionary obtained by sparse PMF for prewhitened natural image patches.}
         \label{fig:imagePatch}
\end{figure}
\section{Conclusion}
\label{sec:conclusion}
In this article, we introduced PMF as a novel  structured matrix factorization approach. Having infinite choices of  identifiable polytopes positions PMF as a general framework with a diverse set of feature representation selections. 
We also proposed a geometric approach for identifiability analysis, which provided practically plausible conditions for the applicability of the PMF framework. We can foresee various future extensions including
underdetermined data models, fully structured matrix factorization, noise/outlier analysis and efficient algorithms.

\appendices
\section{Relevant Convex Analysis Preliminaries}
\label{app:convex}
\par The polar of a set $C\subset \mathbb{R}^r$ with respect to the point $\mathbf{d}\in \mathbb{R}^r$ is defined as
\begin{eqnarray*}
C^{*,\mathbf{d}}=\{ \bx \in \mathbb{R}^r | \langle \bx,\by-\mathbf{d} \rangle \leq1 \ \forall \by\in C\}.
\end{eqnarray*}
When $\mathbf{d}=\mathbf{0}$, we simplify the notation as $C^*$. Based on this notation, we can write $C^{*,\mathbf{d}}=(C-\mathbf{d})^*$. A polytope containing the origin as an interior point can be written as
\begin{eqnarray}
 \Pcal=\{\bx \ | \langle \ba_i, \bx\rangle \leq 1, i=1, \ldots,f\},
 \label{appApoly}
\end{eqnarray}
 the polar of which is given by (Theorem 9.1 in \cite{brondsted2012introduction})
\begin{eqnarray} 
\Pcal^{*}=\text{conv}(\left[\begin{array}{cccc}\ba_1 & \ba_2 & \ldots & \ba_f\end{array}\right]). 
\label{appApolar}
\end{eqnarray}
Therefore, the face normals $\ba_i$ of the polytope $\Pcal$ are the vertices of its polar $\Pcal^{*}$. Note that if $\mathbf{d}$ is an interior point of $\Pcal$, to calculate $\Pcal^{*,\mathbf{d}}$, one can write $\Pcal-\mathbf{d}$ in the form (\ref{appApoly}) and use $\Pcal^{*,\mathbf{d}}=(\Pcal-\mathbf{d})^*$ and (\ref{appApolar}).

For an ellipsoid $\mathcal{E}_P\subset \mathbb{R}^r$ defined by
\begin{eqnarray}
 \mathcal{E}_\mathcal{P}=\{\bC_\mathcal{P}\bu+\bg_\mathcal{P} \mid \|\bu\|_2\le 1, \bu\in\mathbb{R}^r\}, \label{def:ellipsoidapp}
 \end{eqnarray}
where $\bC\in\mathbb{R}^{r \times r}$ and $\bg\in\mathbf{R}^r$, its polar with respect to its center $\bg_p$, can be written as
 \begin{eqnarray}
 \mathcal{E}^{*,\bg_P}_\mathcal{P}&=&\{\bx \mid \langle \bx,\bC_P\bu\rangle \le 1,  \forall \|\bu\|_2\le 1,\bu\in\mathbb{R}^r\}, \nonumber 
 \end{eqnarray}
 which can be simplified to 
\begin{eqnarray}
 \mathcal{E}^{*,\bg_P}_{\mathcal{P}}=\{\bC_P^{-1}\bu \mid 
  \|\bu\|_2\le 1,  \bu\in\mathbb{R}^r\}.
\label{eq:polarellipsoid}
\end{eqnarray}
A particular property of polar operation that we  use  in the article is the reversal of the set inclusion: if  $A\subset B$  holds then we have $B^{*,\mathbf{d}}\subset A^{*,\mathbf{d}}$.

\section{MVIE for Sparse Nonnegative PMF}
\label{app:mviesparsenonnegative}
We can obtain the MVIE parameters $(\bC_{P},\bg_P)$ of the polytope $\Ball_{1,+}$, as the optimal solution of  the following optimization problem: 
\begin{mini!}[l]<b>
{\bC\in\mathbb{R}^{r \times r},\bg\in\mathbb{R}^r}
{-\log\det\bC\label{eq:app_ellipsoid_objective}}{\label{eq:app_ellipsoid_optimization}}{}
\addConstraint{\|\bC\be_i\|_2 -\be_i^T\bg \le 0}{,\quad}{i= 1, \ldots, r\label{eq:app_ellipsoidconstr1}}{ }
\addConstraint{\|\bC\mathbf{1}\|_2 +\mathbf{1}^T\bg \le 1}{\label{eq:app_ellipsoidconstr2}}{}
\addConstraint{\bC\succeq\mathbf{0},}{\label{eq:app_ellipsoidconstr3}}{}
\end{mini!}
which is obtained by applying the description of $\Ball_{1,+}$ in (\ref{ineq:sparsenonnegpoly}) to the generic MVIE optimization problem in (\ref{app:mvie}).
To find the solution of  (\ref{eq:app_ellipsoid_optimization}), we utilize the following Karush-Kuhn-Tucker (KKT) optimality conditions \cite{boyd2004convex} for the solution pair  ($\bC_P,\bg_P$): 
\begin{eqnarray}
-\bC_P^{-1}+\sum_{i=1}^r\lambda_i\frac{\bC_P\be_i\be_i^T}{\|\bC_P\be_i\|_2}+\lambda_{r+1}\frac{\bC_P\mathbf{1}\mathbf{1}^T}{\|\bC_P\mathbf{1}\|_2}-\mathbf{W}=\mathbf{0}, \label{firstcond}\\
\lambda_i(\|\bC_P\be_i\|_2 -\be_i^T\bg_P)=0, \ \text{for all} \ i=1, \ldots, r, \label{eq:bindcon}\\ \lambda_{r+1}(\|\bC_P\mathbf{1}\|_2 +\mathbf{1}^T\bg_P-1)=0, \label{thirdcond}\\
-\sum_{i=1}^r\lambda_i\be_i+\lambda_{r+1}\mathbf{1}=\mathbf{0},\label{lambdaeq}
\\Tr(\mathbf{W}\bC_P)=0, \label{fifthcond}
\end{eqnarray}
where  (\ref{firstcond}) and (\ref{lambdaeq}) represent stationarity conditions of $\bC_P$ and $\bg_P$, respectively. The remaining KKT equations -- (\ref{eq:bindcon}), (\ref{thirdcond}) and (\ref{fifthcond}) -- are known as complementary slackness conditions, which involve optimal dual variables $\lambda_1, \lambda_2, \ldots,\lambda_{r+1}\in\mathbb{R}_+$  and   $\mathbf{W}\succeq\mathbf{0}$ corresponding to the inequality constraints in (\ref{eq:app_ellipsoidconstr1}), (\ref{eq:app_ellipsoidconstr2}) and (\ref{eq:app_ellipsoidconstr3}), respectively.
Based on the optimization in (\ref{eq:app_ellipsoid_optimization}) and the corresponding optimality conditions in (\ref{firstcond})-(\ref{fifthcond}), we can deduce the following: 
\begin{itemize}
\item $\bC_P$ is nonsingular, i.e., $\bC_P\succ\mathbf{0}$, otherwise the objective function $-\log\det\bC_P$ becomes infinite under  the primal feasibility condition  $\bC\succeq\mathbf{0}$  in (\ref{eq:app_ellipsoidconstr3}). Therefore, the nonsingularity of $\bC$ implies $\mathbf{W}=\mathbf{0}$ due  the KKT condition given in  (\ref{fifthcond}). 
\item The condition in (\ref{lambdaeq}) is equivalent to $\lambda_1= \lambda_2= \ldots =\lambda_{r+1}=\lambda$,  further implying that either all or none of the corresponding primal feasibility conditions  are binding, i.e., the inequalities corresponding to these dual variables, (\ref{eq:app_ellipsoidconstr1}) and (\ref{eq:app_ellipsoidconstr2}), are equalities at the optimal point. If we assume none of them are binding, the KKT conditions (\ref{thirdcond}) and (\ref{lambdaeq}) imply that $\lambda_1= \lambda_2= \ldots =\lambda_{r+1}=0$, which further implies  $\bW\neq\mathbf{0}$ due to  (\ref{firstcond}). This outcome contradicts  our earlier finding that $\bW=\mathbf{0}$. Therefore, we conclude that 
\begin{eqnarray}
\lambda_1= \lambda_2= \ldots =\lambda_{r+1}>0, \label{eq:lambdpositive}
\end{eqnarray}
and all corresponding inequalities are binding, i.e.,  $\|\bC_P\be_i\|_2 -\be_i^T\bg_P = 0 \ \text{for} \ i=1, \ldots, r$ and $\|\bC_P\mathbf{1}\|_2 +\mathbf{1}^T\bg_P = 1$ hold.
\item Using (\ref{eq:lambdpositive}) and $\bW=\mathbf{0}$, the expression in (\ref{firstcond}) can be rewritten as
\begin{eqnarray}
-\bC_P^{-1}+\lambda \left( \sum_{i=1}^r\frac{\bC_P\be_i\be_i^T}{\|\bC_P\be_i\|_2}+\frac{\bC_P\mathbf{1}\mathbf{1}^T}{\|\bC_P\mathbf{1}\|_2} \right) = \mathbf{0}. \label{app_c}
\end{eqnarray}
If we multiply both sides of  (\ref{app_c}) by $\bC_P^{-1}$, and rearrange the terms, we obtain
\begin{eqnarray}
\bC_P^{-2}=\lambda \left( \sum_{i=1}^r\frac{\be_i\be_i^T}{\|\bC_P\be_i\|_2}+\frac{\mathbf{1}\mathbf{1}^T}{\|\bC_P\mathbf{1}\|_2} \right). \label{rearrangedC}
\end{eqnarray}
Using the binding constraints   $\|\bC_P\be_i\|_2 =\be_i^T\bg_P$ and $\|\bC_P\mathbf{1}\|_2 = 1-\mathbf{1}^T\bg_P$,  (\ref{rearrangedC}) can be rewritten as
\begin{eqnarray}
\bC_P^{-2}=\lambda \left( \mathbf{G}^{-1}+\frac{\mathbf{1}\mathbf{1}^T}{1-\mathbf{1}^T\bg_P} \right), \label{app_c-2}
\end{eqnarray}
where $\mathbf{G}\in \mathbb{R}^{r \times r}$  is a diagonal matrix, the $i^{\text{th}}$ diagonal entry of which is equal to $\be_i^T\bg$ for all i=1, \ldots, r. Applying the matrix inversion lemma to the right hand side of  (\ref{app_c-2}), we have
\begin{eqnarray}
\bC_P^{2}=\lambda^{-1} \left( \mathbf{G}-\bg_P\bg_P^T \right). \label{c2}
\end{eqnarray}
Multiplying (\ref{c2}) by the ones-vector from both the left and right yields
\begin{eqnarray}
\mathbf{1}^T\bC_P^{2}\mathbf{1}=\lambda^{-1} \mathbf{1}^T\bg_P\left( 1-\mathbf{1}^T\bg_P \right). \label{app_1c21}
\end{eqnarray}
The  binding constraint $\|\bC_P\mathbf{1}\|_2 =1-\mathbf{1}^T\bg_P$ can be used to obtain an alternative expression 
\begin{eqnarray}
\mathbf{1}^T\bC_P^{2}\mathbf{1}=(1-\mathbf{1}^T\bg_P)^2. \label{app_11g2}
\end{eqnarray}
Equating the right hand sides of (\ref{app_1c21}) and (\ref{app_11g2}), we have
\begin{eqnarray}
\lambda = \frac{\mathbf{1}^T\bg_P}{1-\mathbf{1}^T\bg_P} = \frac{1}{1-\mathbf{1}^T\bg_P}-1.  \label{lambda1}
\end{eqnarray}
\item Squaring both sides of the binding constraint $\|\bC_P\be_i\|_2 = \be_i^T\bg$,  we obtain $\be_i^T\bC_P^{2}\be_i = (\be_i^T\bg_P)^2=g_{P,i}^2$, where $g_{P,i}$ stands for $\be_i^T\bg_P$. Inserting the expression for $\bC_P^{2}$ in (\ref{c2}), we obtain
\begin{eqnarray*}
\lambda^{-1}(g_{P,i}-g_{P,i}^2)=g_{P,i}^2,
\end{eqnarray*}
which leads to
\begin{eqnarray}
\lambda = \frac{1-g_{P,i}}{g_{P,i}} = \frac{1}{g_{P,i}}-1, \hspace{0.1in} \forall i \in \{1, \ldots, r\}. \label{lambda2}
\end{eqnarray}
Combining  (\ref{lambda1}) and (\ref{lambda2}), we can write
\begin{equation*}
    1-\mathbf{1}^T\bg = g_{P,i}, \hspace{0.1in} \forall i \in \{1, \ldots, r\}.
\end{equation*}
From this expression and (\ref{lambda2}), we obtain $\bg_P=\frac{1}{r+1}\mathbf{1}$ and $\lambda=r$. Inserting these values into (\ref{c2}) yields
\begin{eqnarray}
\bC_P^{2}= \frac{1}{r}\left( \frac{1}{r+1}\mathbf{I}-\frac{1}{(r+1)^2}\mathbf{1}\mathbf{1}^T \right), \label{parameter1}
\end{eqnarray}
the square root of which is given by
\begin{eqnarray*}
\bC_P= \frac{1}{\sqrt{r}}\left( \frac{1}{\sqrt{r+1}}\mathbf{I}-\frac{\sqrt{r+1}-1}{r(r+1)}\mathbf{1}\mathbf{1}^T \right).
\end{eqnarray*}
\end{itemize}

\section{Proof of Theorem 6}
\label{app:proofoftheorem}
The following lemma from \cite{tatli:2021icassp} is used in the proof of Theorem \ref{thm:generalized}.
\begin{lemma}\underline{\it Polar of the Transformed Set}: \label{lem:polaroftransform}
We can characterize the polar of the transformed set $\bA(S)$ in the following way,
\begin{eqnarray*}
(\bA(S))^{*,\mathbf{d}}
&=&\{ \bx \in \mathbb{R}^n \mid \langle \bx,\bA\by-\mathbf{d} \rangle \leq1 \ \forall \by\in S\}
\end{eqnarray*} 
which corresponds to the set $\bA^{-T}(S^{*,\mathbf{d}})$, when $\bA\mathbf{d} = \mathbf{d}$.
\end{lemma}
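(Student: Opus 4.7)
The proof is essentially a definition-chasing exercise in two parts, and I would treat it that way rather than looking for a conceptual trick.

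For the first claimed equality, the plan is to simply unwind the definition of the polar given at the start of Appendix \ref{app:convex}. By definition of the image, $\bA(S) = \{\bA\by : \by \in S\}$, so a point $\bx$ lies in $(\bA(S))^{*,\mathbf{d}}$ iff $\langle \bx, \bz - \mathbf{d}\rangle \le 1$ for every $\bz \in \bA(S)$. Reparametrizing $\bz = \bA\by$ with $\by$ ranging over $S$ yields exactly the condition $\langle \bx, \bA\by - \mathbf{d}\rangle \le 1$ for all $\by \in S$, which is the asserted set.

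For the second claim, assume $\bA$ is invertible (implicit in writing $\bA^{-T}$) and $\bA\mathbf{d}=\mathbf{d}$. I would use the identity $\langle \bx, \bA\by\rangle = \langle \bA^T\bx, \by\rangle$ together with the hypothesis, rewriting
\begin{equation*}
\langle \bx, \bA\by - \mathbf{d}\rangle = \langle \bx, \bA\by - \bA\mathbf{d}\rangle = \langle \bA^T\bx, \by - \mathbf{d}\rangle.
\end{equation*}
Therefore $\bx \in (\bA(S))^{*,\mathbf{d}}$ iff $\langle \bA^T\bx, \by - \mathbf{d}\rangle \le 1$ for all $\by \in S$, which is exactly the statement that $\bA^T\bx \in S^{*,\mathbf{d}}$, or equivalently $\bx \in \bA^{-T}(S^{*,\mathbf{d}})$. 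This gives set equality in both directions simultaneously, since each step is an iff.

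There is no real obstacle here; the only subtlety worth flagging explicitly is that the fixed-point hypothesis $\bA\mathbf{d}=\mathbf{d}$ is precisely what is needed to pull the translation through the linear map, and that without it the two sets generally differ by the translation $\mathbf{d} - \bA\mathbf{d}$ appearing inside the inner product. I would also note in passing that when $\mathbf{d} = \mathbf{0}$, the hypothesis is automatic and the formula reduces to the familiar $(\bA(S))^* = \bA^{-T}(S^*)$ used in standard convex analysis.
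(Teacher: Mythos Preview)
Your proof is correct and is exactly the natural definition-chasing argument one would expect: expand the polar definition, reparametrize over $S$, then use $\langle \bx, \bA(\by-\mathbf{d})\rangle = \langle \bA^T\bx, \by-\mathbf{d}\rangle$ together with the fixed-point hypothesis to identify the result as $\bA^{-T}(S^{*,\mathbf{d}})$. The paper itself does not supply a proof of this lemma; it merely states it and attributes it to the companion reference \cite{tatli:2021icassp}, so there is no alternative argument to compare against---your write-up is the standard one and would serve perfectly well as the omitted justification.
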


{\bf Proof of Theorem \ref{thm:generalized}:} 
Using the same arguments in the proof of Theorem \ref{thm1}, we write the optimization problem equivalent to (\ref{eq:detmaxoptimization}) as
\begin{maxi!}[l]<b>
{\bA\in \mathbb{R}^{r \times r}}{|\det(\bA)|\label{eq:detmaxeqobjective}}{\label{detmaxeqproblem}}{}
\addConstraint{\bA{\bS_g}_{\:,j}} {\in \Pcal, \quad}{j=1, \ldots, N.\label{eq:detmaxeqconstr}}
\end{maxi!}
We first show the ``if" part: Let $\bS_g$ be any sufficiently scattered factor for $\Pcal$ and $\bH_g$ be any full column rank matrix. Let $\bA_*$ represent any global optimum of the equivalent Det-Max optimization problem in (\ref{detmaxeqproblem}). Below we show that $\bA_*(\text{ext}(\Pcal))=\text{ext}(\Pcal)$.
We first note that due to the constraint (\ref{eq:detmaxeqconstr}), we have
\begin{eqnarray}
 \text{conv}(\bA_*\bS_g)\subseteq\Pcal. \label{inc1}
 \end{eqnarray}
Furthermore,
since $\bA=\mathbf{I}$ is a trivial feasible solution of (\ref{detmaxeqproblem}),  $\bA_*$  ought to satisfy $|\det(\bA_*)|\geq\det(\mathbf{I})=1$. The inclusion of the MVIE of $\Pcal$ in $\text{conv}(\bS_g)$, due to the sufficient scattering condition (PMF.SS.i), and (\ref{inc1})
lead to $\bA_*(\mathcal{E}_\Pcal)\subset \Pcal$. We note that  $\bA_*(\mathcal{E}_\Pcal)$ is an ellipsoid in $\Pcal$, for which
\begin{eqnarray*}
\text{vol}(\bA_*(\mathcal{E}_\Pcal))=|\det(\bA_*)|\text{vol}(\mathcal{E}_P)\ge \text{vol}(\mathcal{E}_\Pcal),\end{eqnarray*}
which uses the lower bound $|\det(\bA_*)|\ge 1$. Conversely, based on the uniqueness of the MVIE for $\Pcal$, we can write $\text{vol}(\bA_*(\mathcal{E}_\Pcal))\le \text{vol}(\mathcal{E}_\Pcal)$. As a result,  we conclude that $|\det(\bA_*)|=1$ and
\begin{eqnarray}
\bA_*(\mathcal{E}_\Pcal)= \mathcal{E}_\Pcal. \label{eq:ellipsoid}
\end{eqnarray}
In other words, the constraint in (\ref{eq:detmaxeqconstr}) and the sufficient scattering condition (PMF.SS.i) together imply  $|\det(\bA_*)|=1$ and restrict $\bA_*$ to map $\mathcal{E}_\Pcal$ onto itself. Thus, we can write $\bA_*\bg_P=\bg_P$ for the center of $\mathcal{E}_\Pcal$.
As a consequence, according to Lemma \ref{lem:polaroftransform},  (\ref{eq:ellipsoid}) is identical to
\begin{eqnarray}
\bA_*^{-T}(\mathcal{E}_\mathcal{P}^{*,\bg_\mathcal{P}})= \mathcal{E}_\mathcal{P}^{*,\bg_\mathcal{P}}, \label{eq:ellipsoidpolar}
\end{eqnarray}
where $\mathcal{E}_\mathcal{P}^{*,\bg_\mathcal{P}}$ is also an ellipsoid. Therefore, (\ref{eq:ellipsoidpolar}) implies
\begin{eqnarray}
\bA_*^{-T}(\text{bd}(\mathcal{E}_\mathcal{P}^{*,\bg_\mathcal{P}}))= \text{bd}(\mathcal{E}_\mathcal{P}^{*,\bg_\mathcal{P}}). \label{eq:ellipsoidpolarbd}
\end{eqnarray}
Using the reversal of the set inclusion  property of the polar operation, given in Appendix A, and (\ref{inc1}), we can write $ \Pcal^{*,\bg_P}\subseteq \text{conv}(\bA_*\bS_g)^{*,\bg_P}$. Applying Lemma \ref{lem:polaroftransform}  to the right hand side of this expression, we obtain 
$\mathcal{P}^{*,\bg_\mathcal{P}}\subseteq\bA_*^{-T}(\text{conv}(\bS_g)^{*,\bg_\mathcal{P}})$,
which implies
\begin{eqnarray}
\bA_*^{-T}(\text{conv}(\bS_g)^{*,\bg_\mathcal{P}})\supseteq\text{ext}(\mathcal{P}^{*,\bg_\mathcal{P}}).\label{eq:extPsubset}
\end{eqnarray}
Based on the sufficient scattering condition  (PMF.SS.ii), we can write the inclusion expressions $\text{conv}(\bS_g)^{*,\bg_\mathcal{P}}\supseteq~\text{ext}(\mathcal{P}^{*,\bg_\mathcal{P}})$, $\text{bd}(\mathcal{E}_\mathcal{P}^{*,\bg_\mathcal{P}})\supseteq~\text{ext}(\mathcal{P}^{*,\bg_\mathcal{P}})$, and
\begin{eqnarray}
\bA_*^{-T}(\text{conv}(\bS_g)^{*,\bg_\mathcal{P}}) \cap \text{bd}(\mathcal{E}_\mathcal{P}^{*,\bg_\mathcal{P}})=\bA_*^{-T}(\text{ext}(\mathcal{P}^{*,\bg_\mathcal{P}})), \label{eq:part1}
\end{eqnarray}
where we also inserted (\ref{eq:ellipsoidpolarbd}). Furthermore, based on (\ref{eq:extPsubset}), we can write
\begin{eqnarray}
\bA^{-T}(\text{conv}(\bS_g)^{*,\bg_\mathcal{P}}) \cap \text{bd}(\mathcal{E}_\mathcal{P}^{*,\bg_\mathcal{P}})\supseteq\text{ext}(\mathcal{P}^{*,\bg_\mathcal{P}}).\label{eq:part2}
\end{eqnarray}
The expressions in (\ref{eq:part1}) and (\ref{eq:part2}) together imply
\begin{eqnarray}
\bA_*^{-T}(\text{ext}(\mathcal{P}^{*,\bg_\mathcal{P}}))\supseteq\text{ext}(\mathcal{P}^{*,\bg_\mathcal{P}}). \label{exttransform}
\end{eqnarray}
Since the cardinality of the sets on both sides are equal, we obtain
\begin{eqnarray}
\bA_*^{-T}(\text{ext}(\mathcal{P}^{*,\bg_\mathcal{P}}))=\text{ext}(\mathcal{P}^{*,\bg_\mathcal{P}}), \label{exttransform_eq}
\end{eqnarray}
which implies
$\bA_*^{-T}(\mathcal{P}^{*,\bg_\mathcal{P}})=\mathcal{P}^{*,\bg_\mathcal{P}}$. Invoking Lemma \ref{lem:polaroftransform}, we obtain $\bA_*(\Pcal)=\Pcal$ and deduce that $\bA_*(\text{ext}(\Pcal))=\text{ext}(\Pcal)$
due to the convexity of $\Pcal$. As a result, the condition that $\text{ext}(\Pcal)$ is a {\it permutation-and/or-sign-only invariant set}  implies $\bA_*=\bD\bPi$, with a diagonal sign matrix $\bD$ and a permutation matrix $\bPi$, which further implies the identifiability of the generative model.

The ``only if" part: Let $\bV_\Pcal\in \mathbb{R}^{r \times K}$ be a matrix that contains all $K$ vertices of $\Pcal$ in its columns. Clearly, the choice $\bS_g=\bV_\Pcal$ is  a sufficiently scattered factor for $\Pcal$. Suppose that  $\text{ext}(\Pcal)$ is not a permutation-and/or-sign-only invariant set, then there exists an $\bA\in\mathbb{R}^{r\times r}$, which is not a product of a diagonal and a permutation matrix, for which $\bA\bV_\Pcal=\bV_\Pcal\bPi$ for some permutation matrix $\bPi\neq \bI$. Since $|\det(\bA)|=1$,  $\bV_\Pcal\bPi$ would be another solution for the Det-Max optimization problem in (\ref{eq:detmaxoptimization}),  violating the identifiability of all sufficiently scattered sets.




%

\bibliographystyle{IEEEtran}
\bibliography{pmfbibfile}

%







\end{document}